\documentclass[sigconf]{acmart}

\usepackage{booktabs}
\usepackage{tabularx}
\usepackage{array}
\usepackage{multirow}
\usepackage{graphicx}
\usepackage{xcolor}
\usepackage{colortbl}
\usepackage{algorithm}
\usepackage{algpseudocode}
\usepackage{multirow}

\AtBeginDocument{%
  }

\theoremstyle{plain}
\newtheorem{theorem}{Theorem}[section]
\newtheorem{proposition}[theorem]{Proposition}

\newtheorem{corollary}[theorem]{Corollary}
\theoremstyle{definition}

\theoremstyle{remark}

\renewcommand\footnotetextcopyrightpermission[1]{}

\begin{document}


\title{Split Federated Unlearning}

\author{%
\textbf{
    Jia Chen\textsuperscript{1,2,*},
    Yecheng Lv\textsuperscript{1,*},
    Xinrong Hu\textsuperscript{1},
    Mingdi Huang\textsuperscript{1},
    Jun Bai\textsuperscript{3,$\dagger$}
    }
}

\affiliation{%
  \institution{%
  \textsuperscript{1} School of Computer Science and Artificial Intelligence,
  Wuhan Textile University, Wuhan, China\\
  \textsuperscript{2} Engineering Research Center of Hubei Province
  for Clothing Information, Wuhan Textile University,
  Wuhan, China\\
    \textsuperscript{3} Australian Institute for Machine Learning (AIML),
    Adelaide University, Adelaide, Australia \\
  }
  \country{\unskip}
}



\begin{abstract}

Split federated learning (SFL) combines partitioned model execution with federated aggregation to support collaborative learning across distributed Web and edge environments. However, removing learned contributions in SFL is challenging due to cross-partition coupling and historical influence propagation through dual-side aggregation. To the best of our knowledge, we provide the first systematic formulation of \emph{Split Federated Unlearning} (SFU), encompassing client-wise and class-wise removal, counterfactual retraining, and partition-specific update constraints. We theoretically characterize historical influence propagation and mode-dependent unlearning feasibility, revealing structural limitations of restricted model updates. As an initial solution, we propose \emph{SURE}, integrating dual-importance estimation, selective parameter perturbation, and importance-guided recovery. Extensive experiments with existing unlearning methods adapted to SFL reveal substantial forgetting--utility trade-offs across model architectures, data distributions, and update modes. SURE achieves competitive agreement with mode-matched retraining at modest additional cost, while further analyses highlight the impact of split position and update constraints on unlearning behavior. These findings establish SFU as a distinct research problem and provide a foundation for unlearning in partitioned federated systems.

\end{abstract}

%



\keywords{
split federated unlearning,
split federated learning,
federated unlearning,
model partitioning,
importance-guided unlearning
}


\maketitle

\pagestyle{plain}
\thispagestyle{plain}
\begingroup
\renewcommand{\thefootnote}{\fnsymbol{footnote}}
\setcounter{footnote}{1}
\footnotetext{Jia Chen and Yecheng Lv contributed equally to this work.}

\setcounter{footnote}{2}
\footnotetext{Corresponding author: Jun Bai
(\texttt{jun.bai@adelaide.edu.au}).}
\endgroup

\section{Introduction}
\label{sec:introduction}

Personalized Web services increasingly rely on user interactions distributed across devices and data owners~\cite{qu2024usergoverned}. Federated learning (FL) enables collaborative training without centralizing raw data~\cite{mcmahan2017communication,bai2025unified}, but training complete models on resource-constrained clients can be costly~\cite{gao2022evaluation}. Split federated learning (SFL) addresses this limitation by partitioning model execution between clients and a server while retaining federated aggregation~\cite{thapa2022splitfed}. As illustrated in Fig.~\ref{fig:sfu_overview}(a), client-side models and their corresponding server-side replicas are jointly optimized, with each partition aggregated separately. This architecture offers a practical approach to collaborative learning in resource-heterogeneous Web and edge environments.


While SFL keeps user data local, it does not address the removal of their learned contributions when users withdraw data or leave a Web service. Deleting local records or ending participation cannot undo their accumulated influence on the trained model. Machine unlearning seeks to remove such influence without full retraining~\cite{bourtoule2021machine,romandini2025survey}. Existing federated unlearning methods, including FedEraser~\cite{liu2021federaser} and FedRecovery~\cite{zhang2023fedrecovery}, address contributions propagated through federated aggregation, while split unlearning considers removal under partitioned model execution~\cite{jiang2025split,etheridge2025sequential}. Neither formulation directly captures SFL, where both mechanisms operate together.

Combining split execution with dual-side aggregation introduces distinct challenges for unlearning in SFL. Fig.~\ref{fig:sfu_overview}(b) illustrates client-wise and class-wise removal, with full retained-data retraining as the ideal counterfactual reference and three model update modes. As highlighted in Fig.~\ref{fig:sfu_overview}(c), three challenges arise: \emph{cross-partition coupling}, as both partitions interact through exchanged activations and gradients; \emph{historical influence propagation}, as repeated dual-side aggregation propagates contributions across clients and rounds; and \emph{mode-dependent feasibility}, as restricting updates to one partition may limit attainable forgetting. These intertwined challenges motivate a dedicated formulation of \emph{Split Federated Unlearning} (SFU).

\begin{figure*}[t]
    \centering
    \includegraphics[width=\textwidth]{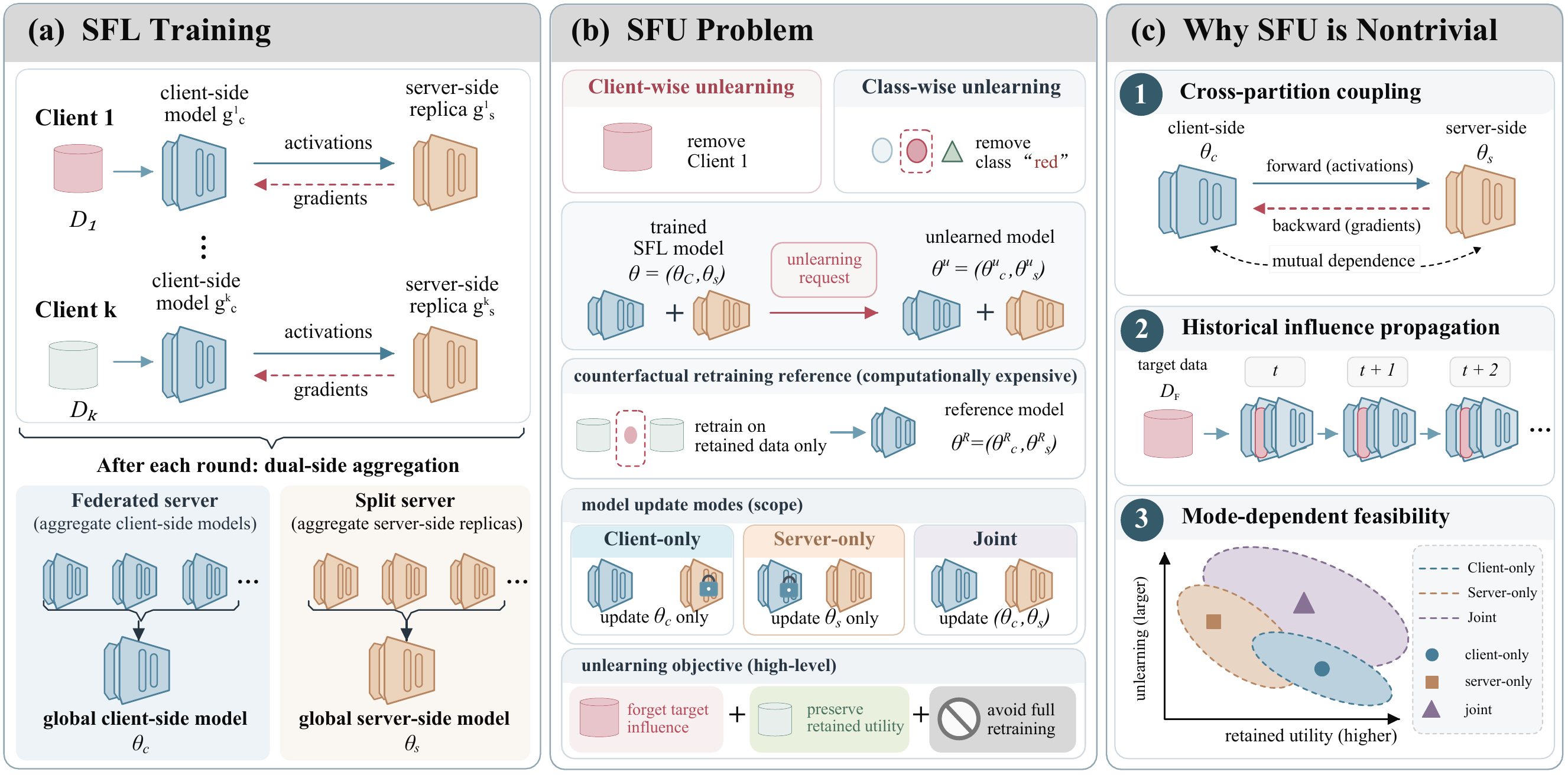}
    \setlength{\abovecaptionskip}{-5pt}
    \caption{Overview of Split Federated Unlearning (SFU).
    (a) SFL jointly trains client-side models and server-side replicas, with independent aggregation of both partitions.
    (b) SFU considers client-wise and class-wise removal, using full retained-data retraining as the counterfactual reference under different model update modes.
    (c) SFU involves cross-partition coupling, historical influence propagation, and mode-dependent unlearning feasibility.}
    \label{fig:sfu_overview}
\end{figure*}

To address these challenges, we formalize SFU for client-wise and class-wise removal, distinguishing forgetting requests from the permitted model updates (Client-only, Server-only, and Joint). Full retained-data retraining serves as the ideal counterfactual reference, while mode-matched retraining captures the effects of restricted updates. We further characterize how forgotten-data influence propagates through coupled dual-side training and analyze mode-dependent unlearning feasibility through local prediction-discrepancy bounds and excess retained-data risk under a quadratic approximation. These results identify structural limitations imposed by model partitioning, independently of any particular unlearning algorithm.


To investigate SFU in practice, we propose \emph{SURE}, an initial SFU approach combining dual-importance estimation, selective parameter perturbation, and retained-data recovery. SURE identifies target-relevant parameters while accounting for retained knowledge, applies targeted perturbations within the permitted model partitions, and performs importance-guided recovery through the original SFL training protocol. We systematically evaluate SURE and existing unlearning methods adapted to SFL across different datasets, data distributions, and model update modes. Experiments on MNIST, CIFAR-10, and CIFAR-100 reveal substantial forgetting--utility trade-offs among adapted baselines, while SURE achieves competitive agreement with mode-matched retraining at modest additional cost. 
Further analyses examine the effects of split position, component design, and multiple client removals.




Our main contributions are threefold:
(1) To the best of our knowledge, we provide the first systematic formulation of \emph{Split Federated Unlearning} (SFU), jointly accounting for cross-partition coupling, dual-side aggregation, and partition-specific update constraints under client-wise and class-wise removal.
(2) We theoretically characterize historical influence propagation and mode-dependent unlearning feasibility, revealing how accumulated contributions and restricted updates affect attainable forgetting behavior.
(3) We propose \emph{SURE} as an initial importance-guided solution and systematically evaluate adapted unlearning methods in SFL, examining forgetting--utility trade-offs, model partitioning, and system efficiency.

\section{Related Work}



\paragraph{Federated Unlearning.}
Machine unlearning aims to remove learned data contributions without full retraining~\cite{bourtoule2021machine,li2026cross}. In FL~\cite{mcmahan2017communication,bai2025non}, unlearning is complicated by historical contributions accumulated through model aggregation. Early approaches explored update calibration and selective retraining~\cite{liu2021federaser,liu2022right}, alongside privacy-preserving model recovery~\cite{zhang2023fedrecovery}. Subsequent work investigated class-specific pruning, incentive-aware selection, subgraph removal, and privacy-aware recovery~\cite{wang2022pruning,lin2024incentive,liu2025subgraph,wang2026diary}. Recent methods further address gradient conflicts, server-side unlearning, and task-vector alignment~\cite{pan2025fedosd,li2026fedcare,nguyen2026foul,weng2026gdfa}. However, these approaches do not explicitly account for dependencies between separately aggregated model partitions.



\paragraph{Split Learning and Unlearning.}
Split learning (SL) partitions model execution between clients and servers through intermediate activations and gradients~\cite{vepakomma2018split}. Recent split unlearning methods address training dependencies through isolation-based removal~\cite{jiang2025split} and gradient-cache projection with retained-data fine-tuning~\cite{etheridge2025sequential}. Parameter-selective approaches, such as SalUn~\cite{fan2024salun}, further demonstrate the role of weight saliency in targeted unlearning. Nevertheless, these methods do not jointly characterize split-model dependencies and repeated federated aggregation.



\paragraph{Split Federated Learning.}
SFL combines partitioned execution with federated aggregation to support resource-constrained clients~\cite{thapa2022splitfed}. Subsequent studies have improved scalability through server-side parallelism~\cite{he2025hourglass} and investigated training instability under heterogeneous data~\cite{tirana2026heterogeneity}. However, such studies primarily address training efficiency and knowledge preservation rather than deliberate removal of historical contributions. Unlearning in SFL must additionally account for cross-partition dependencies, dual-side aggregation, and partition-specific update constraints.

\section{Split Federated Unlearning}
\label{sec:problem}

\subsection{System Model}
\label{sec:system_model}

We consider an SFL system~\cite{thapa2022splitfed} with $K$ clients, a split server, and a federated aggregation server. Client $k\in\mathcal{K}=\{1,\ldots,K\}$ holds data $\mathcal{D}_k$ and a client-side model $f_C(\cdot;\theta_{C,k})$; the split server maintains a corresponding replica $f_S(\cdot;\theta_{S,k})$. The predictor is $f_k(x)=f_S(f_C(x;\theta_{C,k});\theta_{S,k})$. Training exchanges cut-layer activations and gradients, followed by independent aggregation of both partitions each round:
\begin{equation}
\theta_s^{t+1}=\sum_{k\in\mathcal{K}_t}w_{k,t}^{s}\theta_{s,k}^{t+1},
\qquad s\in\{C,S\},
\label{eq:sfl_aggregation}
\end{equation}
where $\mathcal{K}_t$ is the participating client set and the weights $w_{k,t}^{s}$ are normalized. Client-side models are aggregated by the federated aggregation server and server-side replicas by the split server; both aggregated partitions initialize the next round. Thus, split computation couples the two partitions, while aggregation propagates client contributions across rounds.

\subsection{Problem Formulation}
\label{sec:problem_formulation}

\paragraph{Unlearning Requests.}
Let $\mathcal{D}=\bigcup_{k\in\mathcal{K}}\mathcal{D}_k$ and $\mathcal{D}_R=\mathcal{D}\setminus\mathcal{D}_F$. We consider client-wise removal of client $f$ and class-wise removal of class $c_f$:
\begin{equation}
\mathcal{D}_F=\begin{cases}
\mathcal{D}_f, & \text{client-wise},\\
\displaystyle\bigcup_{k\in\mathcal{K}}
\{(x,y)\in\mathcal{D}_k:y=c_f\}, & \text{class-wise}.
\end{cases}
\label{eq:unlearning_requests}
\end{equation}
The withdrawing client is excluded from subsequent participation; for class-wise removal, clients with retained data may remain.

\paragraph{Counterfactual Reference.}
Let $\mathcal{A}_{\mathrm{SFL}}$ denote the original training procedure, including split execution and dual-side aggregation. Define
\begin{equation}
\theta^*=\mathcal{A}_{\mathrm{SFL}}(\mathcal{D};\xi),\qquad
\theta^R=\mathcal{A}_{\mathrm{SFL}}(\mathcal{D}_R;\xi'),
\label{eq:retraining_reference}
\end{equation}
where $\theta=(\theta_C,\theta_S)$ and $\xi,\xi'$ represent training randomness. The ideal reference $\theta^R$ is trained from initialization on retained data using the same SFL procedure, with participation and aggregation weights updated accordingly. Given $\theta^*$, $\mathcal{D}_R$, and admissible request information $\mathcal{I}_F$, an unlearning method outputs $\theta^U=\mathcal{U}(\theta^*,\mathcal{D}_R,\mathcal{I}_F)$. Exact unlearning requires agreement with the retraining distribution~\cite{bourtoule2021machine}; here we study approximate unlearning without full retraining.

\paragraph{Model Update Modes.}
We distinguish $\mathcal{M}\in\{\{C\},\{S\},\{C,S\}\}$, denoting \emph{Client-only}, \emph{Server-only}, and \emph{Joint} updates, independently of the forgetting request. Any frozen partition, including its client-specific replicas and persistent buffers $b_s$, remains at the original checkpoint:
\begin{equation}
(\theta_s^{(t)},b_s^{(t)})=(\theta_s^*,b_s^*),
\quad s\notin\mathcal{M},\ \forall t.
\label{eq:model_update_modes}
\end{equation}
Frozen partitions still participate in split computation but are not updated.

\paragraph{Unlearning Objectives.}
SFU seeks to approximate full retraining while preserving retained-data utility and limiting computation and communication. We measure predictive discrepancy by
\begin{equation}
\Delta_F(\theta^U,\theta^R)=
\mathbb{E}_{x\sim\mathcal{Q}}
\big[d(f(x;\theta^U),f(x;\theta^R))\big],
\label{eq:forgetting_discrepancy}
\end{equation}
where $\mathcal{Q}$ is an evaluation distribution and $d$ a prediction discrepancy. The remaining objectives are retained-data risk $\mathcal{L}_R(\theta^U)$ and unlearning cost $\mathcal{C}(\mathcal{U})$, subject to $\mathcal{M}$. For evaluation, \emph{mode-matched Retrain} reinitializes and retrains only the permitted partition on $\mathcal{D}_R$, freezing the other at $\theta^*$. It isolates update-mode restrictions but does not replace full retraining as the ideal reference; neither reference is assumed available to $\mathcal{U}$.

\subsection{Theoretical Characterization}
\label{sec:theoretical_analysis}

\paragraph{Historical Influence Propagation.}
Training-data influence has been studied through algorithmic stability and influence functions~\cite{hardt2016train,koh2017influence}. We examine how such influence propagates through successive rounds of coupled SFL training. Let $z^t=(\theta_C^t,\theta_S^t,\omega^t)$ denote the synchronized training state at round $t$, where $\omega^t$ includes auxiliary optimizer states and persistent buffers.
Under coupled randomness and a common initialization, the original and retained-only trajectories satisfy
\begin{equation}
z^{t+1}=\mathcal{G}_t(z^t),\qquad
z_R^{t+1}=\mathcal{G}_t^R(z_R^t),
\label{eq:sfu_dynamics}
\end{equation}
where each round mapping includes split updates and dual-side aggregation. Set $\delta_t=z^t-z_R^t$.

\begin{theorem}[Historical Influence Propagation]
\label{thm:influence_propagation}
If $\mathcal{G}_t$ is continuously differentiable along the segment from $z_R^t$ to $z^t$, define
\begin{equation}
\begin{aligned}
A_t&=\int_0^1 D\mathcal{G}_t(z_R^t+\alpha\delta_t)\,d\alpha,\\
b_t&=\mathcal{G}_t(z_R^t)-\mathcal{G}_t^R(z_R^t).
\end{aligned}
\label{eq:sfu_influence_terms}
\end{equation}
Then, with $\delta_0=0$,
\begin{equation}
\delta_T=\sum_{t=0}^{T-1}
(A_{T-1}\cdots A_{t+1})b_t,
\label{eq:sfu_influence_decomposition}
\end{equation}
where empty products equal the identity. If $\|A_t\|_2\le L_t$, then
\begin{equation}
\|\delta_T\|_2\le
\sum_{t=0}^{T-1}\left(\prod_{r=t+1}^{T-1}L_r\right)\|b_t\|_2.
\label{eq:sfu_influence_bound}
\end{equation}
\end{theorem}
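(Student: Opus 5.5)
The plan is to derive a one-step affine recursion for $\delta_t$, unroll it by induction on $T$, and then take norms.

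\emph{One-step recursion.} We split the increment as
\begin{equation*}
\delta_{t+1}=z^{t+1}-z_R^{t+1}
=\big(\mathcal{G}_t(z^t)-\mathcal{G}_t(z_R^t)\big)+\big(\mathcal{G}_t(z_R^t)-\mathcal{G}_t^R(z_R^t)\big).
\end{equation*}
The second bracket is exactly $b_t$ by \eqref{eq:sfu_influence_terms}. For the first bracket we use the integral form of the mean value theorem, which is valid because $\mathcal{G}_t$ is $C^1$ along the segment. Set $\phi(\alpha)=\mathcal{G}_t(z_R^t+\alpha\delta_t)$ for $\alpha\in[0,1]$. By the chain rule $\phi'(\alpha)=D\mathcal{G}_t(z_R^t+\alpha\delta_t)\,\delta_t$, and this is continuous in $\alpha$. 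The fundamental theorem of calculus, applied componentwise, then gives
\begin{equation*}
\mathcal{G}_t(z^t)-\mathcal{G}_t(z_R^t)=\phi(1)-\phi(0)=A_t\delta_t .
\end{equation*}
Here $\delta_t$ is pulled out of the integral because it does not depend on $\alpha$. Hence
\begin{equation*}
\delta_{t+1}=A_t\delta_t+b_t .
\end{equation*}
This is the only step with analytic content, and its main subtlety is the regularity hypothesis. It needs $C^1$ only on the segment, not globally. It also needs the state space, including the optimizer states and buffers in $\omega^t$, to be treated as a real vector space in which the segment lies inside the domain of $\mathcal{G}_t$.

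\emph{Unrolling.} We prove \eqref{eq:sfu_influence_decomposition} by induction on $T$. For $T=0$ the sum is empty and $\delta_0=0$. For the inductive step, substitute the hypothesis into the recursion:
\begin{equation*}
\delta_{T+1}=A_T\sum_{t=0}^{T-1}(A_{T-1}\cdots A_{t+1})b_t+b_T=\sum_{t=0}^{T}(A_T\cdots A_{t+1})b_t .
\end{equation*}
The final term $b_T$ carries the empty product, which equals the identity by convention. Matrix products do not commute, so we keep the left-multiplication order throughout.

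\emph{Norm bound.} Starting from \eqref{eq:sfu_influence_decomposition}, apply the triangle inequality and then the submultiplicativity of the operator $2$-norm:
\begin{equation*}
\|(A_{T-1}\cdots A_{t+1})b_t\|_2\le \prod_{r=t+1}^{T-1}\|A_r\|_2\,\|b_t\|_2 .
\end{equation*}
Bounding each factor by $\|A_r\|_2\le L_r$ then yields \eqref{eq:sfu_influence_bound}.

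I expect no real obstacle beyond the mean-value step above; the rest is bookkeeping.
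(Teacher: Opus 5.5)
Your proposal is correct and follows the paper's proof: you add and subtract $\mathcal{G}_t(z_R^t)$, apply the integral mean value theorem to get $\delta_{t+1}=A_t\delta_t+b_t$, unroll from $\delta_0=0$, and bound the norm with the triangle inequality and submultiplicativity. Your explicit induction, and your remark that the state space (including $\omega^t$) must be a vector space in which the segment lies inside the domain of $\mathcal{G}_t$, only make the bookkeeping more careful than the paper's version.
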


Theorem~\ref{thm:influence_propagation} decomposes historical influence into round-wise discrepancies $b_t$ and their propagation through subsequent updates (proof in Appendix~\ref{app:proof-influence}). Here, $b_t$ also captures changes in client participation and aggregation weights. For a single local gradient step, the cross-partition Jacobian blocks satisfy
\begin{equation}
\begin{aligned}
J_{CS}^{t}&=-\eta_C\!\sum_{k\in\mathcal{K}_t}
 w_{k,t}^{C}\nabla_{\theta_C\theta_S}^{2}\ell_{k,t},\\
J_{SC}^{t}&=-\eta_S\!\sum_{k\in\mathcal{K}_t}
 w_{k,t}^{S}\nabla_{\theta_S\theta_C}^{2}\ell_{k,t},
\end{aligned}
\label{eq:sfu_cross_jacobian}
\end{equation}
where $\ell_{k,t}$ is the local split-model loss. Nonzero mixed derivatives couple the partition dynamics despite separate aggregation. Consequently, removing only a client's final-round update does not generally reproduce the retained-only training trajectory.


\paragraph{Mode-Dependent Unlearning Feasibility.}
We characterize attainable prediction changes through local model linearization, a perspective also used to analyze neural-network training dynamics~\cite{lee2019wide}. Let $F(\theta)=f(\cdot;\theta)$ belong to $\mathcal{H}=L^2(\mathcal{Q};\mathbb{R}^m)$, with retraining shift $r=F(\theta^R)-F(\theta^*)$.
For mode $\mathcal{M}$, let $\Theta_{\mathcal{M}}=\{\Delta\theta:\Delta\theta_s=0\ \forall s\notin\mathcal{M}\}$ and $\mathcal{V}_{\mathcal{M}}=\operatorname{range}(DF(\theta^*)|_{\Theta_{\mathcal{M}}})$. The best achievable discrepancy within radius $\rho$ is
\begin{equation}
\mathcal{E}_{\mathcal{M}}(\rho)=
\inf_{\substack{\Delta\theta\in\Theta_{\mathcal{M}}\\\|\Delta\theta\|_2\le\rho}}
\|F(\theta^*+\Delta\theta)-F(\theta^R)\|_{\mathcal{H}}.
\label{eq:sfu_feasibility_error}
\end{equation}

\begin{theorem}[Local Unlearning Feasibility Bound]
\label{thm:feasibility_bound}
Suppose $F$ is continuously differentiable near $\theta^*$ with an $L$-Lipschitz derivative on $B(\theta^*,\rho)$. Let $P_{\mathcal{M}}$ project orthogonally onto $\mathcal{V}_{\mathcal{M}}$ and $d_{\mathcal{M}}=\|(I-P_{\mathcal{M}})r\|_{\mathcal{H}}$. Then
\begin{equation}
\mathcal{E}_{\mathcal{M}}(\rho)
\ge\left[d_{\mathcal{M}}-\frac{L\rho^2}{2}\right]_+.
\label{eq:sfu_feasibility_bound}
\end{equation}
\end{theorem}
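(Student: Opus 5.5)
The idea is to linearize $F$ at $\theta^*$, control the remainder with the Lipschitz bound on $DF$, and use the fact that the linear part lies in $\mathcal{V}_{\mathcal{M}}$. The residual $(I-P_{\mathcal{M}})r$ is orthogonal to $\mathcal{V}_{\mathcal{M}}$, so it cannot be cancelled by the linear term. Only the second-order remainder can reduce it, and that remainder is at most $L\rho^2/2$.

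Fix any admissible $\Delta\theta\in\Theta_{\mathcal{M}}$ with $\|\Delta\theta\|_2\le\rho$. The segment $\theta^*+\alpha\Delta\theta$, $\alpha\in[0,1]$, lies in $B(\theta^*,\rho)$, so the fundamental theorem of calculus in the Banach space $\mathcal{H}$ gives
\begin{equation*}
F(\theta^*+\Delta\theta)=F(\theta^*)+DF(\theta^*)\Delta\theta+R(\Delta\theta),
\end{equation*}
where
\begin{equation*}
R(\Delta\theta)=\int_0^1\big(DF(\theta^*+\alpha\Delta\theta)-DF(\theta^*)\big)\Delta\theta\,d\alpha .
\end{equation*}
The $L$-Lipschitz property of $DF$ in operator norm gives $\|R(\Delta\theta)\|_{\mathcal{H}}\le\int_0^1 L\alpha\|\Delta\theta\|_2^2\,d\alpha\le L\rho^2/2$.

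Next, write $F(\theta^*+\Delta\theta)-F(\theta^R)=DF(\theta^*)\Delta\theta-r+R(\Delta\theta)$. Set $v=DF(\theta^*)\Delta\theta$. Since $\Delta\theta\in\Theta_{\mathcal{M}}$, we have $v\in\mathcal{V}_{\mathcal{M}}$ by definition of $\mathcal{V}_{\mathcal{M}}$. Decomposing $r=P_{\mathcal{M}}r+(I-P_{\mathcal{M}})r$ and using orthogonality,
\begin{equation*}
\|v-r\|_{\mathcal{H}}^2=\|v-P_{\mathcal{M}}r\|_{\mathcal{H}}^2+d_{\mathcal{M}}^2\ge d_{\mathcal{M}}^2.
\end{equation*}
The reverse triangle inequality then gives
\begin{equation*}
\|F(\theta^*+\Delta\theta)-F(\theta^R)\|_{\mathcal{H}}\ge d_{\mathcal{M}}-\tfrac{L\rho^2}{2}.
\end{equation*}
The norm is also trivially nonnegative, so it is bounded below by the positive part. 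Taking the infimum over admissible $\Delta\theta$ yields the claim.

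The only delicate point is the projection step. $\mathcal{V}_{\mathcal{M}}$ need not be closed when $\mathcal{H}$ is infinite-dimensional, so "orthogonal projection onto $\mathcal{V}_{\mathcal{M}}$" should be read as projection onto its closure. This is harmless here: $\Theta_{\mathcal{M}}$ is a finite-dimensional parameter subspace, so $\mathcal{V}_{\mathcal{M}}$ is finite-dimensional and automatically closed. The inequality $\|v-r\|\ge\|(I-P_{\mathcal{M}})r\|$ for every $v\in\mathcal{V}_{\mathcal{M}}$ is then just the best-approximation property of projections. A second technicality is that differentiability of $F$ into $L^2(\mathcal{Q};\mathbb{R}^m)$ must be understood as Fréchet differentiability, so that the integral form of the remainder is valid. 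I would state this as the standing interpretation of the hypothesis.
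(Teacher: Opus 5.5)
Your proposal is correct and follows the paper's proof essentially step for step. Both arguments use the first-order expansion at $\theta^*$ with the integral remainder bounded by $L\rho^2/2$, the fact that $DF(\theta^*)[\Delta\theta]\in\mathcal{V}_{\mathcal{M}}$ gives $\|DF(\theta^*)[\Delta\theta]-r\|_{\mathcal{H}}\ge d_{\mathcal{M}}$ via orthogonal projection, then the reverse triangle inequality, followed by taking the positive part and the infimum. Your two technical remarks go slightly beyond the paper, which leaves both implicit, and both are sound: $\mathcal{V}_{\mathcal{M}}$ is closed because it is finite-dimensional, and differentiability should be read in the Fr\'echet sense.
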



The residual $d_{\mathcal{M}}$ measures the retraining shift inaccessible to first-order updates within the permitted partitions. If $d_{\mathcal{M}}>L\rho^2/2$, exact predictive agreement is impossible within the update ball (proof in Appendix~\ref{app:proof-feasibility}).

\begin{corollary}[Unlearning Mode Hierarchy]
\label{cor:mode_hierarchy}
Joint updates include both single-partition feasible sets. Hence, for the same $\rho$,
\begin{equation}
\begin{aligned}
d_{\{C,S\}}&\le\min\{d_{\{C\}},d_{\{S\}}\},\\
\mathcal{E}_{\{C,S\}}(\rho)&\le
\min\{\mathcal{E}_{\{C\}}(\rho),\mathcal{E}_{\{S\}}(\rho)\}.
\end{aligned}
\label{eq:sfu_feasibility_hierarchy}
\end{equation}
These bounds concern attainable rather than algorithm-specific performance (proof in Appendix~\ref{app:proof-hierarchy}).
\end{corollary}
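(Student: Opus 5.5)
The plan rests on a single monotonicity fact: for $s\in\{C,S\}$, the single-partition update set $\Theta_{\{s\}}$ sits inside the joint set $\Theta_{\{C,S\}}$. Each inequality in the corollary then follows because an infimum or a projection residual can only shrink when it is taken over a larger set. First I will check the inclusion. By definition, $\Theta_{\{s\}}=\{\Delta\theta:\Delta\theta_{s'}=0\ \forall s'\neq s\}$. The condition defining $\Theta_{\{C,S\}}$ is vacuous, because there is no partition outside $\{C,S\}$, so $\Theta_{\{C,S\}}$ is the whole parameter space. Hence $\Theta_{\{C\}},\Theta_{\{S\}}\subseteq\Theta_{\{C,S\}}$.

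\emph{Second inequality.} Fix $\rho$. The feasible set in \eqref{eq:sfu_feasibility_error} for $\mathcal{M}=\{s\}$ is $\{\Delta\theta\in\Theta_{\{s\}}:\|\Delta\theta\|_2\le\rho\}$. This set is contained in the corresponding set for $\mathcal{M}=\{C,S\}$, and the objective $\|F(\theta^*+\Delta\theta)-F(\theta^R)\|_{\mathcal{H}}$ does not depend on the mode. Since an infimum over a superset is no larger, $\mathcal{E}_{\{C,S\}}(\rho)\le\mathcal{E}_{\{s\}}(\rho)$ for each $s$. Taking the minimum over $s\in\{C,S\}$ gives the claim. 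This step uses no regularity of $F$.

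\emph{First inequality.} Linearity of $DF(\theta^*)$ together with the inclusion gives $\mathcal{V}_{\{s\}}=DF(\theta^*)(\Theta_{\{s\}})\subseteq DF(\theta^*)(\Theta_{\{C,S\}})=\mathcal{V}_{\{C,S\}}$. I will then use the characterization $\|(I-P_{\mathcal{M}})r\|_{\mathcal{H}}=\inf_{v\in\mathcal{V}_{\mathcal{M}}}\|r-v\|_{\mathcal{H}}$, i.e., the residual equals the distance from $r$ to $\mathcal{V}_{\mathcal{M}}$. Because the ranges are nested, this distance can only decrease when passing from $\mathcal{V}_{\{s\}}$ to $\mathcal{V}_{\{C,S\}}$, so $d_{\{C,S\}}\le d_{\{s\}}$ for each $s$, and hence $d_{\{C,S\}}\le\min\{d_{\{C\}},d_{\{S\}}\}$.

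\emph{Main obstacle.} The only delicate point is whether the orthogonal projection $P_{\mathcal{M}}$ onto $\mathcal{V}_{\mathcal{M}}$ is well defined. This holds automatically here, because the parameter space is finite-dimensional. Therefore each $\mathcal{V}_{\mathcal{M}}$ is a finite-dimensional, hence closed, subspace of the Hilbert space $\mathcal{H}$. The projection theorem then applies and yields exactly the distance characterization used above.

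Finally, I will remark that both inequalities concern the attainable quantities $\mathcal{E}_{\mathcal{M}}$ and $d_{\mathcal{M}}$. They say nothing about the output of any particular unlearning algorithm $\mathcal{U}$, which matches the caveat in the statement.
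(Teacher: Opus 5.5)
Your proposal is correct and follows the paper's proof. In both, the inclusions $\Theta_{\{C\}},\Theta_{\{S\}}\subseteq\Theta_{\{C,S\}}$ give nested ranges under $DF(\theta^*)$, so the projection residual (the distance from $r$ to a larger subspace) cannot increase, and they give nested radius-$\rho$ feasible sets, so the infimum defining $\mathcal{E}$ cannot increase. Your observation that each $\mathcal{V}_{\mathcal{M}}$ is finite-dimensional and hence closed, so that $P_{\mathcal{M}}$ is well defined and is the best-approximation map, is a detail the paper leaves implicit; it does not change the argument.
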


\paragraph{Excess Risk under Single-Partition Updates.}
To examine the effect of freezing one model partition, we consider a local positive-definite quadratic surrogate around the retrained solution, following standard second-order optimization analysis~\cite{nocedal2006numerical}, defined as:
\begin{equation}
Q_R(\theta)=\mathcal{L}_R(\theta^R)
+\tfrac12(\theta-\theta^R)^\top H_R(\theta-\theta^R),
\label{eq:sfu_quadratic_risk}
\end{equation}
where $H_R=\bigl[\begin{smallmatrix}H_{CC}&H_{CS}\\H_{SC}&H_{SS}\end{smallmatrix}\bigr]\succ0$.

\begin{proposition}[Single-Partition Excess-Risk Gap]
\label{prop:excess_risk}
Let $e_C=\theta_C^*-\theta_C^R$ and $e_S=\theta_S^*-\theta_S^R$. Under this surrogate, the minimum excess retained-data risks for Client-only and Server-only updates are, respectively,
\begin{align}
\mathcal{R}_{C}
&=\tfrac12 e_S^\top
(H_{SS}-H_{SC}H_{CC}^{-1}H_{CS})e_S,
\label{eq:sfu_client_risk_gap}\\
\mathcal{R}_{S}
&=\tfrac12 e_C^\top
(H_{CC}-H_{CS}H_{SS}^{-1}H_{SC})e_C.
\label{eq:sfu_server_risk_gap}
\end{align}
Both are nonnegative, and strictly positive for nonzero displacement in the respective frozen partition.
\end{proposition}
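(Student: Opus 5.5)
The plan is to minimize the quadratic surrogate $Q_R$ over the affine set of parameters reachable under each single-partition mode, and to recognize the minimum as a Schur-complement quadratic form. For Client-only updates, equation~\eqref{eq:model_update_modes} fixes the server partition at $\theta_S=\theta_S^*$, while $\theta_C$ is free. The excess risk is therefore
\[
\mathcal{R}_C=\min_{\theta_C}\bigl[Q_R(\theta_C,\theta_S^*)-\mathcal{L}_R(\theta^R)\bigr]
=\min_{u}\tfrac12\begin{pmatrix}u\\ e_S\end{pmatrix}^{\!\top}\!H_R\begin{pmatrix}u\\ e_S\end{pmatrix},
\]
where $u=\theta_C-\theta_C^R$ ranges over the whole client parameter space. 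The Server-only case is symmetric: fix $\theta_C=\theta_C^*$ and minimize over $v=\theta_S-\theta_S^R$.

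The key step is a partial minimization. Expanding the quadratic gives
\[
\tfrac12 u^\top H_{CC}u+u^\top H_{CS}e_S+\tfrac12 e_S^\top H_{SS}e_S .
\]
Since $H_R\succ0$, its principal block $H_{CC}$ is positive definite and hence invertible. The objective is then strictly convex in $u$, with a unique minimizer $u^\star=-H_{CC}^{-1}H_{CS}e_S$ obtained by setting the gradient to zero. Substituting $u^\star$ back, and using $H_{SC}=H_{CS}^\top$ from the symmetry of $H_R$, yields
\[
\tfrac12 e_S^\top(H_{SS}-H_{SC}H_{CC}^{-1}H_{CS})e_S .
\]
This is exactly \eqref{eq:sfu_client_risk_gap}. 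Alternatively, one can complete the square:
\[
\tfrac12(u-u^\star)^\top H_{CC}(u-u^\star)+\tfrac12 e_S^\top S_S e_S,
\]
where $S_S$ denotes the Schur complement. This form makes both the minimizer and the minimal value immediate. Swapping the roles of $C$ and $S$, with $H_{SS}\succ0$ invertible, gives \eqref{eq:sfu_server_risk_gap}.

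For the sign claims, I will invoke the standard fact that if $H_R\succ0$, then both Schur complements $H_{SS}-H_{SC}H_{CC}^{-1}H_{CS}$ and $H_{CC}-H_{CS}H_{SS}^{-1}H_{SC}$ are positive definite. To keep the argument self-contained, I will prove it directly rather than cite it. Take $w=(u^\star,e_S)$ with $u^\star$ as above. Then $w^\top H_R w=e_S^\top S_S e_S$, and $w\neq0$ whenever $e_S\neq0$. Strict positive definiteness of $H_R$ therefore gives $e_S^\top S_S e_S>0$ for every nonzero $e_S$, and trivially $\mathcal{R}_C=0$ when $e_S=0$. Hence $\mathcal{R}_C\ge0$, with strict inequality exactly when the frozen server partition is displaced from its retrained value. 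The same argument applies to $\mathcal{R}_S$ and $e_C$.

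I do not expect a genuine obstacle, since this is essentially block Gaussian elimination. The one point needing care is interpretive: the statement says ``minimum excess retained-data risk''. I will make explicit that the minimization is unconstrained over the permitted partition, with no radius $\rho$, unlike Theorem~\ref{thm:feasibility_bound}. I will also make explicit that the excess is measured relative to $\mathcal{L}_R(\theta^R)$, the surrogate's global minimum. With these two conventions fixed, positivity of $H_R$ ensures the infimum is attained and the formulas follow.
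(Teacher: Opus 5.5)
Your proposal is correct and follows the paper's proof step for step. You fix the frozen block at its checkpoint, minimize the strictly convex quadratic in the free block to get $u^\star=-H_{CC}^{-1}H_{CS}e_S$, substitute to obtain the Schur complement, and swap $C$ and $S$ for the other mode. The only difference is that you verify positive definiteness of the Schur complement directly, by evaluating $H_R$ at $(u^\star,e_S)$, whereas the paper simply cites that fact as a consequence of $H_R\succ0$.
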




The Schur-complement terms~\cite{zhang2005schur} quantify the residual risk after the updatable partition optimally compensates for the frozen one. Thus, cross-partition adaptation can reduce, but not necessarily eliminate, the excess retained-data risk under the quadratic surrogate (proof in Appendix~\ref{app:proof-risk}).

\begin{figure*}[t]
    \centering
    \includegraphics[width=\textwidth]{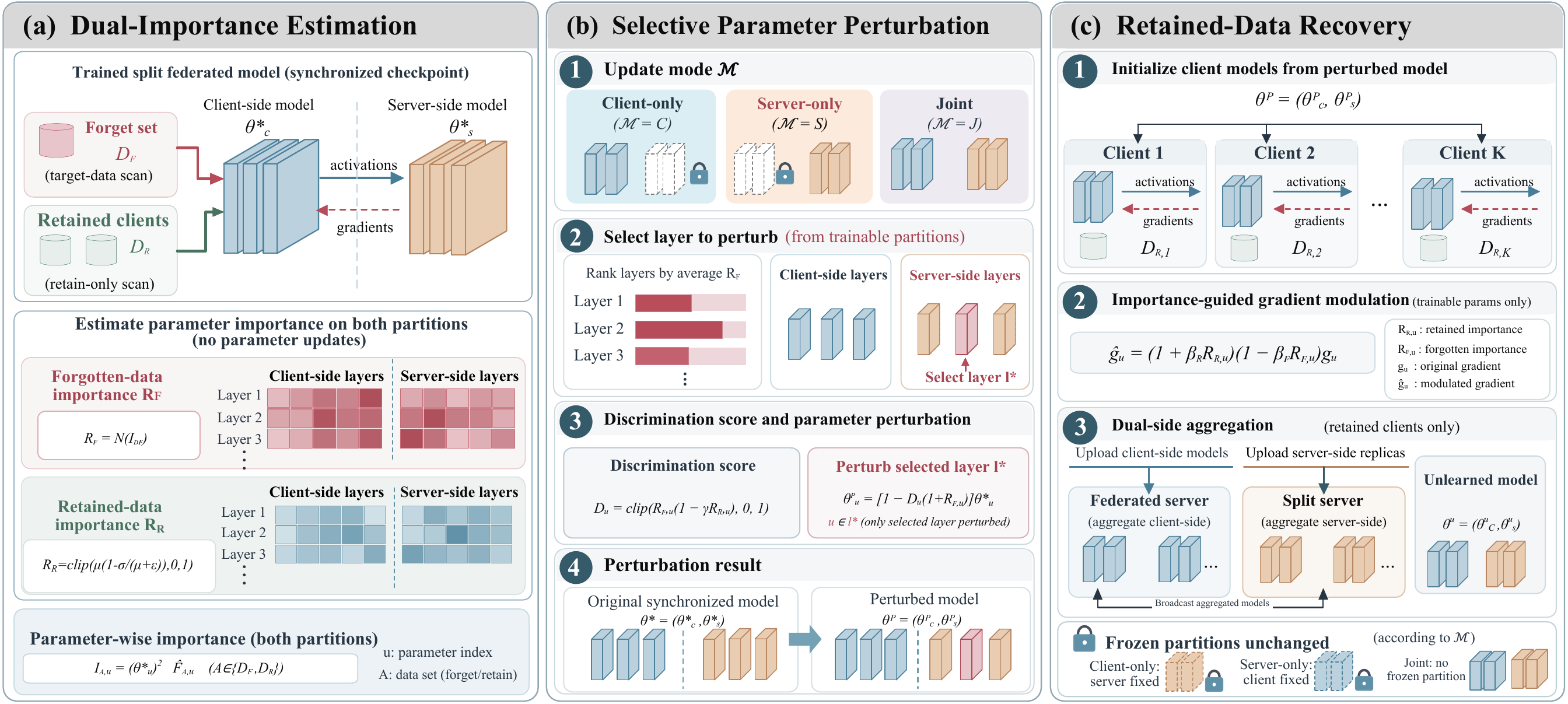}
    \setlength{\abovecaptionskip}{-5pt}
    
\caption{Overview of SURE.
(a) Dual-importance estimation identifies parameters relevant to forgotten and retained data across both model partitions.
(b) Mode-aware perturbation targets one layer within the permitted partitions.
(c) Retained-data recovery employs importance-guided gradient modulation and dual-side aggregation while preserving frozen partitions.}

    \label{fig:sure_framework}
\end{figure*}

\section{SURE: An Initial Solution}
\label{sec:method}

SURE provides an importance-guided approximation to SFU without reconstructing the original training trajectory. Starting from the synchronized checkpoint $\theta^*=(\theta_C^*,\theta_S^*)$, it estimates forgotten- and retained-data importance, selectively perturbs a target-relevant layer, and recovers retained utility through constrained split-federated training. 
The three stages are illustrated in Fig.~\ref{fig:sure_framework}(a)--(c), with the complete procedure summarized in Algorithm~\ref{alg:sure}. Throughout, only partitions permitted by update mode $\mathcal{M}$ may change.

\begin{algorithm}[t]
\caption{SURE for Split Federated Unlearning}
\label{alg:sure}
\begin{algorithmic}[1]
\Require $\theta^*$, $\mathcal{D}_F$, $\{\mathcal{D}_{k,R}\}$, $\mathcal{M}$, $T_U$, validation set $\mathcal{V}$, forgetting constraint $\mathcal{C}_F$
\Ensure Unlearned model $\theta^U$
\State Initialize replicas from $\theta^*$; compute $R_F$ via Eqs.~\eqref{eq:sure_fisher}--\eqref{eq:sure_forget_importance}
\State Compute $R_{k,R}$ for each retained client and $R_R$ via Eq.~\eqref{eq:sure_retained_importance}
\State Select $\ell^*$ using Eq.~\eqref{eq:sure_layer_selection}
\State Compute $D_u$ and $\theta^P$ using Eqs.~\eqref{eq:sure_discrimination}--\eqref{eq:sure_perturbation}
\State $\theta^0\gets\theta^P$
\For{$r=0,\ldots,T_U-1$}
    \State Initialize retained-client replicas from $\theta^r$
    \State Perform retained-only split training; modulate trainable gradients via Eq.~\eqref{eq:sure_gradient_modulation}
    \State Aggregate only partitions in $\mathcal{M}$ using Eq.~\eqref{eq:sure_recovery_aggregation}; preserve frozen states
\EndFor
\State Select $\theta^U\in\{\theta^r\}_{r=0}^{T_U}$ using $\mathcal{V}$ and $\mathcal{C}_F$ (Appendix~\ref{app:experimental-settings})
\State \Return $\theta^U$
\end{algorithmic}
\end{algorithm}

\subsection{Dual-Importance Estimation}
\label{sec:importance}

As shown in Fig.~\ref{fig:sure_framework}(a), SURE scans forgotten and retained data through the original split computation path, with all replicas initialized from $\theta^*$, without parameter updates or aggregation. For dataset $\mathcal{A}$ and parameter $u$ in either partition, we use a mini-batch gradient-squared proxy for diagonal Fisher information:
\begin{equation}
\widehat F_{\mathcal{A},u}=
\frac{1}{|\mathcal{A}|}\sum_{B\in\mathcal{B}(\mathcal{A})}
|B|\left(\frac{\partial\mathcal{L}_B(\theta^*)}{\partial\theta_u}\right)^2,
\label{eq:sure_fisher}
\end{equation}
where $\mathcal{B}(\mathcal{A})$ contains mini-batches and $\mathcal{L}_B$ is the average batch loss. To account for the subsequent multiplicative perturbation, we weight sensitivity by parameter magnitude:
\begin{equation}
I_{\mathcal{A},u}=(\theta_u^*)^2\widehat F_{\mathcal{A},u}.
\label{eq:sure_importance}
\end{equation}
The forgotten-data map is
\begin{equation}
R_{F,u}=\mathcal{N}(I_{\mathcal{D}_F,u}),
\label{eq:sure_forget_importance}
\end{equation}
where $\mathcal{N}$ is a fixed normalization to $[0,1]$. This applies to either client-wise or class-wise forget sets. For each retained client $k\in\mathcal{K}_R$, we likewise compute $R_{k,R,u}=\mathcal{N}(I_{\mathcal{D}_{k,R},u})$. With $n_R=|\mathcal{K}_R|$, define
\begin{equation}
\begin{aligned}
\mu_u&=n_R^{-1}\sum_{k\in\mathcal{K}_R}R_{k,R,u},\\
\sigma_u&=\sqrt{n_R^{-1}\sum_{k\in\mathcal{K}_R}
(R_{k,R,u}-\mu_u)^2}.
\end{aligned}
\label{eq:sure_retained_statistics}
\end{equation}
The common retained-data importance is
\begin{equation}
R_{R,u}=\operatorname{clip}\!\left(
\mu_u\left(1-\frac{\sigma_u}{\mu_u+\epsilon}\right),0,1\right),
\label{eq:sure_retained_importance}
\end{equation}
where $\epsilon>0$ ensures numerical stability. The variance discount favors parameters consistently important across retained clients.

\subsection{Selective Parameter Perturbation}
\label{sec:perturbation}

Fig.~\ref{fig:sure_framework}(b) illustrates mode-aware perturbation. Among layers $\mathcal{L}_{\mathcal{M}}$ in the updatable partitions, SURE selects a single layer with the highest mean forgotten-data importance:
\begin{equation}
\ell^*=\arg\max_{\ell\in\mathcal{L}_{\mathcal{M}}}
\frac{1}{|\ell|}\sum_{u\in\ell}R_{F,u}.
\label{eq:sure_layer_selection}
\end{equation}
Thus, Joint mode considers both partitions but still perturbs only one layer. Within $\ell^*$, the discrimination score is
\begin{equation}
D_u=\operatorname{clip}\big(R_{F,u}(1-\gamma R_{R,u}),0,1\big),
\label{eq:sure_discrimination}
\end{equation}
where $\gamma\ge0$ discounts parameters important to retained data; clipping also permits $\gamma>1$. SURE then scales the selected parameters:
\begin{equation}
\theta_u^P=
\begin{cases}
[1-D_u(1+R_{F,u})]\theta_u^*, & u\in\ell^*,\\
\theta_u^*, & u\notin\ell^*.
\end{cases}
\label{eq:sure_perturbation}
\end{equation}
Larger target-specific scores induce stronger changes, potentially including sign reversal. Other layers and frozen partitions remain unchanged. The resulting $\theta^P$ initializes recovery; perturbation alone does not establish unlearning.

\subsection{Retained-Data Recovery}
\label{sec:recovery}

Recovery trains exclusively on $\mathcal{D}_R$ using the original SFL protocol (Fig.~\ref{fig:sure_framework}(c)). To restore retained utility while limiting reversal of the intervention, gradients $g_u$ are modulated as
\begin{equation}
\widetilde g_u=(1+\beta_R R_{R,u})(1-\beta_F R_{F,u})g_u,
\label{eq:sure_gradient_modulation}
\end{equation}
where $\beta_R\ge0$ and $\beta_F\in[0,1]$ respectively reinforce retained-relevant updates and attenuate forgotten-relevant ones. Optionally, importance-weighted stability penalties constrain drift from $\theta^*$:
\begin{equation}
\Omega_A(\theta,\theta^*)=\tfrac12\sum_{u\in\mathcal{P}_{\mathcal{M}}}
R_{A,u}(\theta_u-\theta_u^*)^2,\quad A\in\{F,R\},
\label{eq:sure_stability}
\end{equation}
where $\mathcal{P}_{\mathcal{M}}$ is the set of trainable parameters. The recovery loss is
\begin{equation}
\mathcal{L}_{\mathrm{rec}}=\mathcal{L}_R(\theta)
+\lambda_R\Omega_R(\theta,\theta^*)
+\lambda_F\Omega_F(\theta,\theta^*).
\label{eq:sure_recovery_objective}
\end{equation}
Each recovery round initializes client-specific replicas from the current synchronized state, performs split training on retained clients, and aggregates the two permitted partitions separately:
\begin{equation}
\theta_s^{r+1}=
\begin{cases}
\displaystyle\sum_{k\in\mathcal{K}_R^r}
w_{k,r}^{s,R}\theta_{s,k}^{r+1}, & s\in\mathcal{M},\\
\theta_s^*, & s\notin\mathcal{M},
\end{cases}
\label{eq:sure_recovery_aggregation}
\end{equation}
where $\mathcal{K}_R^r$ and $w_{k,r}^{s,R}$ denote participating retained clients and normalized aggregation weights. Frozen parameters, replicas, and persistent buffers remain fixed, although frozen partitions still support split computation. The final checkpoint $\theta^U$ is selected using validation data and the applicable forgetting constraint (Appendix~\ref{app:experimental-settings}). SURE targets approximate, not certified, unlearning.

\providecommand{\msd}[2]{\ensuremath{#1_{\scriptscriptstyle\pm #2}}}
\providecommand{\msdb}[2]{\ensuremath{\mathbf{#1}_{\scriptscriptstyle\pm #2}}}

\begin{table*}[t]
\centering
\caption{Client-wise unlearning under extreme non-IID partitions (mean $\pm$ SD, \%). AG (Avg. Gap)$\downarrow$ is the mean absolute distance of Ret., For., Test, and MIA from mode-matched Retrain (percentage points), computed from reported means. Ret./Test$\uparrow$: higher is better. Bold marks the best non-reference Ret./Test or AG within each dataset--model and update mode. 
}

\label{tab:clientwise-extreme}
\begingroup
\small
\setlength{\tabcolsep}{1pt}
\renewcommand{\arraystretch}{0.1}
\resizebox{\textwidth}{!}{%
\begin{tabular}{@{}l*{15}{c}@{}}
\toprule
& \multicolumn{5}{c}{Client-only} & \multicolumn{5}{c}{Server-only} & \multicolumn{5}{c}{Joint} \\
\cmidrule(lr){2-6}\cmidrule(lr){7-11}\cmidrule(l){12-16}
Method & Ret.$\uparrow$ & For. & Test$\uparrow$ & MIA & AG$\downarrow$ & Ret.$\uparrow$ & For. & Test$\uparrow$ & MIA & AG$\downarrow$ & Ret.$\uparrow$ & For. & Test$\uparrow$ & MIA & AG$\downarrow$ \\
\midrule
\multicolumn{16}{c}{\textit{MNIST / LeNet}} \\
\cmidrule(lr){1-16}
\textit{Retrain} & \msd{88.24}{1.48} & \msd{96.97}{0.35} & \msd{89.36}{0.37} & \msd{44.26}{0.68} & 0.00 & \msd{88.35}{2.24} & \msd{0.00}{0.00} & \msd{80.17}{0.47} & \msd{53.75}{0.92} & 0.00 & \msd{89.71}{2.41} & \msd{0.00}{0.00} & \msd{81.07}{0.74} & \msd{52.33}{0.98} & 0.00 \\
\cmidrule(lr){1-16}
Fine-tune & \msd{88.03}{1.26} & \msd{96.97}{4.96} & \msdb{89.62}{2.18} & \msd{46.70}{2.98} & 0.73 & \msd{92.04}{1.63} & \msd{4.74}{2.32} & \msd{83.42}{1.25} & \msd{52.93}{3.92} & 3.13 & \msdb{93.87}{1.52} & \msd{17.03}{14.96} & \msd{86.17}{1.62} & \msd{53.40}{1.98} & 6.84 \\
FedEraser & \msd{86.86}{2.58} & \msd{96.54}{3.65} & \msd{87.86}{2.75} & \msd{44.07}{3.27} & 0.88 & \msd{81.95}{2.72} & \msd{0.00}{0.00} & \msd{73.72}{1.35} & \msd{49.74}{3.62} & 4.22 & \msd{81.12}{1.21} & \msd{0.00}{0.00} & \msd{73.02}{1.65} & \msd{47.83}{2.25} & 5.29 \\
FedRecovery & \msd{86.00}{0.86} & \msd{96.54}{0.43} & \msd{87.14}{0.94} & \msd{50.59}{5.77} & 2.81 & \msd{86.06}{1.23} & \msd{95.67}{0.21} & \msdb{87.14}{1.62} & \msd{46.70}{5.26} & 28.00 & \msd{86.29}{1.23} & \msd{95.24}{0.43} & \msdb{86.60}{0.34} & \msd{48.18}{4.62} & 27.09 \\
SalUn & \msd{82.74}{0.63} & \msd{93.07}{1.63} & \msd{84.10}{0.71} & \msd{45.39}{1.73} & 3.95 & \msd{11.02}{1.63} & \msd{0.00}{0.00} & \msd{9.96}{3.82} & \msd{51.83}{1.41} & 37.37 & \msd{15.02}{2.36} & \msd{0.00}{0.00} & \msd{21.96}{3.26} & \msd{51.53}{1.24} & 33.65 \\
MoDe & \msd{87.26}{1.28} & \msd{97.40}{0.42} & \msd{88.63}{1.28} & \msd{52.52}{2.95} & 2.60 & \msd{87.08}{1.78} & \msd{0.00}{0.00} & \msd{78.42}{2.32} & \msd{51.48}{2.35} & 1.32 & \msd{85.63}{0.24} & \msd{0.00}{0.00} & \msd{76.94}{2.32} & \msd{52.23}{4.22} & 2.08 \\
FedOSD & \msd{85.97}{2.51} & \msd{95.67}{0.98} & \msd{87.05}{1.57} & \msd{51.51}{2.26} & 3.28 & \msd{86.76}{2.31} & \msd{0.00}{0.00} & \msd{78.89}{1.62} & \msd{51.58}{1.22} & \textbf{1.26} & \msd{88.71}{2.21} & \msd{0.00}{0.00} & \msd{80.61}{1.27} & \msd{52.54}{2.52} & 0.42 \\
NoT & \msd{86.83}{2.03} & \msd{95.67}{4.64} & \msd{88.29}{2.44} & \msd{44.59}{1.13} & 1.03 & \msdb{92.26}{2.82} & \msd{0.00}{0.00} & \msd{83.08}{2.12} & \msd{47.60}{1.62} & 3.24 & \msd{88.25}{3.52} & \msd{0.00}{0.00} & \msd{80.96}{3.42} & \msd{52.54}{2.23} & 0.45 \\
FedCARE & \msdb{88.12}{1.45} & \msd{96.54}{3.55} & \msd{89.36}{1.54} & \msd{42.93}{1.77} & 0.47 & \msd{89.89}{1.52} & \msd{0.00}{0.00} & \msd{81.71}{1.72} & \msd{50.72}{1.13} & 1.53 & \msd{91.65}{1.63} & \msd{7.94}{10.55} & \msd{83.30}{2.55} & \msd{51.01}{1.27} & 3.36 \\
FedMemPrune & \msd{87.22}{3.35} & \msd{96.10}{1.70} & \msd{88.16}{2.25} & \msd{44.75}{1.45} & 0.90 & \msd{69.80}{2.23} & \msd{0.00}{0.00} & \msd{62.65}{1.85} & \msd{45.54}{1.61} & 11.07 & \msd{72.64}{2.35} & \msd{0.00}{0.00} & \msd{65.44}{1.25} & \msd{50.46}{1.22} & 8.64 \\
\specialrule{0.45pt}{2pt}{1pt}
\rowcolor{black!5} \textbf{SURE (Ours)} & \msd{87.81}{4.45} & \msd{96.97}{1.77} & \msd{89.36}{4.07} & \msd{44.66}{1.35} & \textbf{0.21} & \msd{80.88}{2.05} & \msd{0.00}{0.00} & \msd{72.91}{2.36} & \msd{55.77}{1.64} & 4.19 & \msd{89.92}{3.21} & \msd{0.00}{0.00} & \msd{80.75}{3.27} & \msd{52.95}{2.26} & \textbf{0.29} \\
\specialrule{0.45pt}{1pt}{2pt}
\addlinespace[1pt]
\multicolumn{16}{c}{\textit{CIFAR-10 / CNN}} \\
\cmidrule(lr){1-16}
\textit{Retrain} & \msd{45.65}{2.19} & \msd{0.00}{0.00} & \msd{40.90}{1.39} & \msd{46.98}{1.97} & 0.00 & \msd{45.16}{1.32} & \msd{0.00}{0.00} & \msd{40.07}{0.59} & \msd{46.31}{0.46} & 0.00 & \msd{39.78}{3.24} & \msd{0.00}{0.00} & \msd{35.80}{1.09} & \msd{47.97}{1.52} & 0.00 \\
\cmidrule(lr){1-16}
Fine-tune & \msd{34.10}{2.81} & \msd{1.90}{0.12} & \msd{30.84}{1.72} & \msd{52.74}{3.62} & 7.32 & \msd{42.37}{1.62} & \msd{12.47}{3.82} & \msd{37.49}{1.24} & \msd{47.49}{2.32} & 4.76 & \msd{30.99}{1.52} & \msd{8.82}{1.73} & \msd{27.54}{1.62} & \msd{52.41}{2.26} & 7.58 \\
FedEraser & \msd{33.00}{2.09} & \msd{2.30}{0.09} & \msd{30.15}{1.28} & \msd{49.73}{3.75} & 7.11 & \msd{21.65}{2.25} & \msd{0.00}{0.00} & \msd{19.25}{1.62} & \msd{49.04}{2.62} & 11.77 & \msd{14.89}{2.26} & \msd{0.00}{0.00} & \msd{14.03}{1.72} & \msd{50.53}{3.62} & 12.31 \\
FedRecovery & \msd{36.47}{2.69} & \msd{54.50}{12.15} & \msd{38.24}{0.89} & \msd{53.77}{1.62} & 18.28 & \msd{36.94}{1.19} & \msd{51.00}{8.32} & \msdb{38.28}{0.23} & \msd{53.75}{1.24} & 17.11 & \msd{37.11}{2.99} & \msd{50.20}{8.55} & \msd{38.27}{1.82} & \msd{52.71}{1.62} & 15.02 \\
SalUn & \msd{27.33}{4.63} & \msd{0.00}{0.00} & \msd{24.87}{0.85} & \msd{46.36}{0.48} & 8.74 & \msd{11.11}{2.32} & \msd{0.00}{0.00} & \msd{10.00}{3.53} & \msd{49.00}{0.63} & 16.70 & \msd{14.25}{1.52} & \msd{0.00}{0.00} & \msd{18.54}{0.62} & \msd{52.11}{0.63} & 11.73 \\
MoDe & \msd{21.46}{3.06} & \msd{0.00}{0.00} & \msd{19.37}{1.93} & \msd{48.96}{1.42} & 11.93 & \msd{17.41}{3.62} & \msd{0.00}{0.00} & \msd{15.06}{3.31} & \msd{51.76}{0.73} & 14.55 & \msd{18.28}{1.38} & \msd{0.00}{0.00} & \msd{17.50}{1.23} & \msd{49.50}{0.73} & 10.33 \\
FedOSD & \msd{37.53}{3.40} & \msd{25.00}{4.06} & \msd{36.20}{2.15} & \msd{52.93}{1.92} & 10.94 & \msd{41.47}{3.26} & \msd{0.00}{0.00} & \msd{34.71}{2.62} & \msd{49.28}{1.45} & 3.01 & \msd{43.58}{2.30} & \msd{0.00}{0.00} & \msd{38.44}{1.93} & \msd{50.78}{1.31} & 2.31 \\
NoT & \msd{32.22}{1.79} & \msd{0.00}{0.00} & \msd{29.30}{1.58} & \msd{54.44}{2.41} & 8.12 & \msd{23.09}{1.61} & \msd{0.00}{0.00} & \msd{20.49}{2.23} & \msd{53.93}{1.73} & 12.32 & \msd{27.21}{1.72} & \msd{0.00}{0.00} & \msd{24.21}{1.28} & \msd{51.99}{2.62} & 7.05 \\
FedCARE & \msdb{40.81}{2.86} & \msd{20.10}{7.64} & \msdb{38.32}{2.34} & \msd{48.35}{1.14} & 7.22 & \msd{39.72}{1.89} & \msd{0.00}{0.00} & \msd{33.93}{3.62} & \msd{49.60}{1.24} & 3.72 & \msd{34.58}{2.26} & \msd{0.00}{0.00} & \msd{31.12}{1.62} & \msd{50.15}{1.36} & 3.02 \\
FedMemPrune & \msd{14.15}{0.97} & \msd{0.00}{0.00} & \msd{12.75}{0.37} & \msd{49.58}{2.75} & 15.56 & \msd{21.80}{0.62} & \msd{0.00}{0.00} & \msd{18.79}{0.72} & \msd{51.51}{2.85} & 12.46 & \msd{12.79}{1.62} & \msd{0.00}{0.00} & \msd{11.59}{0.32} & \msd{50.74}{2.73} & 13.49 \\
\specialrule{0.45pt}{2pt}{1pt}
\rowcolor{black!5} \textbf{SURE (Ours)} & \msd{37.14}{1.23} & \msd{3.70}{0.64} & \msd{34.15}{1.02} & \msd{51.74}{3.38} & \textbf{5.93} & \msdb{42.94}{1.57} & \msd{0.00}{0.00} & \msd{37.25}{1.62} & \msd{51.48}{2.37} & \textbf{2.55} & \msdb{44.22}{0.91} & \msd{0.00}{0.00} & \msdb{39.47}{0.82} & \msd{47.77}{2.31} & \textbf{2.08} \\
\specialrule{0.45pt}{1pt}{2pt}
\addlinespace[1pt]
\multicolumn{16}{c}{\textit{CIFAR-10 / ResNet-18}} \\
\cmidrule(lr){1-16}
\textit{Retrain} & \msd{62.59}{1.12} & \msd{5.60}{0.36} & \msd{56.36}{1.26} & \msd{50.94}{0.75} & 0.00 & \msd{65.31}{1.12} & \msd{0.00}{0.00} & \msd{58.90}{0.72} & \msd{50.85}{1.21} & 0.00 & \msd{63.54}{1.93} & \msd{0.00}{0.00} & \msd{56.19}{1.62} & \msd{50.82}{0.62} & 0.00 \\
\cmidrule(lr){1-16}
Fine-tune & \msd{55.32}{1.24} & \msd{31.90}{1.63} & \msd{51.40}{2.42} & \msd{44.84}{2.42} & 11.16 & \msd{51.16}{1.62} & \msd{8.10}{2.21} & \msd{45.77}{2.51} & \msd{48.70}{2.32} & 9.38 & \msd{63.12}{1.82} & \msd{15.25}{4.72} & \msd{56.12}{3.36} & \msd{50.64}{1.26} & 3.98 \\
FedEraser & \msd{49.04}{2.02} & \msd{37.00}{1.98} & \msd{47.51}{1.72} & \msd{51.36}{1.26} & 13.56 & \msd{17.63}{1.69} & \msd{0.00}{0.00} & \msd{15.87}{2.31} & \msd{47.04}{3.32} & 23.63 & \msd{16.95}{2.24} & \msd{0.00}{0.00} & \msd{14.64}{1.32} & \msd{50.59}{2.73} & 22.09 \\
FedRecovery & \msd{59.21}{1.24} & \msd{71.80}{7.53} & \msdb{60.97}{0.57} & \msd{48.47}{2.53} & 19.17 & \msd{59.48}{2.32} & \msd{68.70}{13.25} & \msdb{60.92}{1.25} & \msd{46.56}{1.34} & 20.21 & \msd{62.55}{3.36} & \msd{57.67}{6.59} & \msdb{60.94}{1.63} & \msd{47.28}{1.24} & 16.74 \\
SalUn & \msd{48.96}{2.21} & \msd{16.60}{0.61} & \msd{45.58}{1.24} & \msd{50.91}{0.27} & \textbf{8.86} & \msd{11.86}{3.23} & \msd{0.00}{0.00} & \msd{13.86}{1.63} & \msd{51.93}{0.68} & 24.89 & \msd{15.14}{2.62} & \msd{0.00}{0.00} & \msd{19.48}{3.23} & \msd{51.24}{0.82} & 21.38 \\
MoDe & \msd{47.68}{0.93} & \msd{0.90}{0.30} & \msd{43.20}{1.73} & \msd{45.16}{3.62} & 9.64 & \msd{18.15}{3.61} & \msd{0.00}{0.00} & \msd{21.58}{1.25} & \msd{50.00}{0.62} & 21.33 & \msd{25.14}{2.22} & \msd{0.00}{0.00} & \msd{27.32}{1.83} & \msd{52.35}{1.62} & 17.20 \\
FedOSD & \msd{59.14}{2.30} & \msd{51.60}{1.83} & \msd{58.05}{2.35} & \msd{43.84}{1.28} & 14.56 & \msd{60.22}{2.43} & \msd{0.00}{0.00} & \msd{54.79}{2.92} & \msd{48.70}{1.32} & 2.84 & \msdb{64.43}{3.63} & \msd{0.03}{0.05} & \msd{56.88}{1.73} & \msd{47.09}{1.73} & 1.34 \\
NoT & \msd{50.79}{1.25} & \msd{21.50}{4.66} & \msd{44.96}{1.32} & \msd{48.14}{1.21} & 10.48 & \msd{52.69}{1.36} & \msd{0.00}{0.00} & \msd{46.75}{1.02} & \msd{47.27}{2.21} & 7.09 & \msd{47.07}{1.62} & \msd{0.00}{0.00} & \msd{42.03}{1.83} & \msd{47.65}{1.52} & 8.45 \\
FedCARE & \msdb{59.51}{1.89} & \msd{65.70}{1.27} & \msd{60.12}{1.32} & \msd{48.57}{1.84} & 17.33 & \msd{61.53}{1.24} & \msd{0.70}{0.64} & \msd{54.74}{2.25} & \msd{49.12}{1.63} & 2.59 & \msd{63.90}{1.32} & \msd{0.37}{0.24} & \msd{56.53}{1.84} & \msd{47.78}{1.16} & \textbf{1.03} \\
FedMemPrune & \msd{26.52}{1.17} & \msd{3.30}{0.21} & \msd{24.24}{0.73} & \msd{49.32}{2.32} & 18.03 & \msd{32.32}{0.62} & \msd{0.00}{0.00} & \msd{28.62}{0.62} & \msd{53.75}{2.21} & 16.54 & \msd{23.12}{1.94} & \msd{0.00}{0.00} & \msd{21.49}{0.45} & \msd{50.02}{2.23} & 18.98 \\
\specialrule{0.45pt}{2pt}{1pt}
\rowcolor{black!5} \textbf{SURE (Ours)} & \msd{52.94}{1.97} & \msd{27.80}{1.68} & \msd{49.32}{1.26} & \msd{48.64}{2.02} & 10.30 & \msdb{62.40}{1.23} & \msd{0.40}{0.35} & \msd{56.30}{0.32} & \msd{49.71}{2.15} & \textbf{1.76} & \msd{58.21}{1.62} & \msd{0.47}{0.43} & \msd{51.72}{0.84} & \msd{47.96}{2.51} & 3.28 \\
\specialrule{0.45pt}{1pt}{2pt}
\addlinespace[1pt]
\multicolumn{16}{c}{\textit{CIFAR-100 / ResNet-18}} \\
\cmidrule(lr){1-16}
\textit{Retrain} & \msd{67.63}{0.35} & \msd{48.67}{0.78} & \msd{44.85}{0.47} & \msd{63.30}{0.53} & 0.00 & \msd{72.54}{0.96} & \msd{0.00}{0.00} & \msd{42.54}{0.20} & \msd{52.34}{0.25} & 0.00 & \msd{69.73}{0.24} & \msd{0.00}{0.00} & \msd{40.36}{0.67} & \msd{55.46}{0.68} & 0.00 \\
\cmidrule(lr){1-16}
Fine-tune & \msd{68.02}{0.72} & \msd{54.89}{1.42} & \msd{45.04}{0.24} & \msd{66.11}{0.77} & 2.40 & \msd{56.30}{0.67} & \msd{11.11}{0.81} & \msd{36.48}{0.62} & \msd{60.30}{0.50} & 10.34 & \msd{57.06}{0.84} & \msd{11.11}{0.32} & \msd{38.02}{0.89} & \msd{59.47}{0.39} & 7.53 \\
FedEraser & \msd{43.43}{0.24} & \msd{29.33}{0.90} & \msd{32.14}{0.57} & \msd{58.94}{0.84} & 15.15 & \msd{22.64}{1.03} & \msd{0.00}{0.00} & \msd{19.41}{0.72} & \msd{49.38}{0.47} & 19.00 & \msd{10.84}{0.22} & \msd{0.00}{0.00} & \msd{8.93}{0.95} & \msd{47.11}{0.39} & 24.67 \\
FedRecovery & \msd{66.62}{0.82} & \msd{58.44}{1.29} & \msd{44.04}{0.46} & \msd{65.64}{1.15} & 3.48 & \msd{66.86}{0.53} & \msd{55.56}{5.18} & \msdb{43.99}{0.68} & \msd{65.22}{0.28} & 18.89 & \msd{66.81}{0.69} & \msd{55.33}{1.35} & \msdb{43.96}{0.23} & \msd{65.60}{0.92} & 18.00 \\
SalUn & \msd{61.78}{0.95} & \msd{45.56}{1.14} & \msd{41.76}{0.25} & \msd{61.49}{0.62} & 3.47 & \msd{12.91}{0.36} & \msd{0.00}{0.00} & \msd{10.64}{0.39} & \msd{47.81}{0.83} & 24.02 & \msd{3.41}{0.43} & \msd{0.00}{0.00} & \msd{3.13}{0.74} & \msd{46.93}{1.31} & 28.02 \\
MoDe & \msd{52.69}{0.84} & \msd{35.78}{2.32} & \msd{37.51}{0.34} & \msd{59.91}{0.51} & 9.64 & \msd{23.48}{0.27} & \msd{0.00}{0.00} & \msd{19.63}{0.65} & \msd{53.61}{0.74} & 18.31 & \msd{15.63}{0.22} & \msd{0.00}{0.00} & \msd{13.08}{0.31} & \msd{52.30}{0.75} & 21.14 \\
FedOSD & \msd{66.99}{0.29} & \msd{56.22}{1.34} & \msd{43.90}{0.83} & \msd{65.11}{0.72} & 2.74 & \msdb{70.57}{0.50} & \msd{3.56}{0.54} & \msd{42.19}{0.59} & \msd{56.70}{0.32} & 2.56 & \msdb{70.57}{0.41} & \msd{4.22}{1.33} & \msd{42.16}{0.75} & \msd{57.54}{0.30} & 2.24 \\
NoT & \msd{57.23}{0.31} & \msd{42.00}{0.38} & \msd{40.80}{0.56} & \msd{60.70}{0.94} & 5.93 & \msd{56.64}{0.76} & \msd{6.22}{0.45} & \msd{37.00}{0.31} & \msd{58.62}{0.80} & 8.49 & \msd{56.02}{0.62} & \msd{4.67}{1.18} & \msd{39.48}{0.79} & \msd{58.83}{0.95} & 5.66 \\
FedCARE & \msd{67.31}{0.55} & \msd{56.89}{1.75} & \msd{44.44}{0.19} & \msd{65.79}{1.52} & 2.86 & \msd{69.14}{0.42} & \msd{4.22}{0.72} & \msd{42.13}{0.74} & \msd{54.32}{0.68} & \textbf{2.50} & \msd{68.96}{0.95} & \msd{4.22}{1.45} & \msd{42.26}{0.83} & \msd{54.26}{0.81} & \textbf{2.02} \\
FedMemPrune & \msd{41.83}{0.24} & \msd{29.33}{3.42} & \msd{32.28}{1.25} & \msd{55.76}{0.30} & 16.31 & \msd{45.43}{0.78} & \msd{0.00}{0.00} & \msd{35.33}{0.23} & \msd{48.52}{0.51} & 9.54 & \msd{36.54}{0.97} & \msd{0.00}{0.00} & \msd{28.45}{0.89} & \msd{47.53}{0.57} & 13.26 \\
\specialrule{0.45pt}{2pt}{1pt}
\rowcolor{black!5} \textbf{SURE (Ours)} & \msdb{68.07}{0.63} & \msd{52.89}{1.33} & \msdb{45.09}{0.78} & \msd{65.16}{0.85} & \textbf{1.69} & \msd{70.54}{0.64} & \msd{4.89}{0.43} & \msd{38.69}{0.39} & \msd{59.51}{1.07} & 4.48 & \msd{60.02}{1.15} & \msd{4.89}{1.07} & \msd{35.82}{0.72} & \msd{56.56}{0.53} & 5.06 \\
\bottomrule
\end{tabular}%
}
\endgroup
\vspace{-1.5em}
\end{table*}

\begin{table*}[t]
\centering
\caption{Class-wise unlearning under Dirichlet non-IID partitions ($\alpha=0.1$). Ret., For., Test, and MIA are mean $\pm$ SD (\%). AG (Avg. Gap)$\downarrow$ is the mean absolute difference from mode-matched Retrain (pp), computed from reported metric means. Ret./Test$\uparrow$: higher is better; bold highlights the best non-reference Ret./Test means and lowest AG.
}
\label{tab:classwise-main}
\begingroup
\small
\setlength{\tabcolsep}{1pt}
\renewcommand{\arraystretch}{0.1}
\resizebox{\textwidth}{!}{%
\begin{tabular}{@{}l*{15}{c}@{}}
\toprule
& \multicolumn{5}{c}{Client-only} & \multicolumn{5}{c}{Server-only} & \multicolumn{5}{c}{Joint} \\
\cmidrule(lr){2-6}\cmidrule(lr){7-11}\cmidrule(l){12-16}
Method & Ret.$\uparrow$ & For. & Test$\uparrow$ & MIA & AG$\downarrow$ & Ret.$\uparrow$ & For. & Test$\uparrow$ & MIA & AG$\downarrow$ & Ret.$\uparrow$ & For. & Test$\uparrow$ & MIA & AG$\downarrow$ \\
\midrule
\multicolumn{16}{c}{\textit{MNIST / LeNet}} \\
\cmidrule(lr){1-16}
\textit{Retrain} & \msd{99.23}{0.02} & \msd{99.43}{0.25} & \msd{98.74}{0.21} & \msd{49.46}{4.13} & 0.00 & \msd{99.11}{0.10} & \msd{0.00}{0.00} & \msd{88.96}{0.35} & \msd{50.33}{1.44} & 0.00 & \msd{99.12}{0.14} & \msd{0.00}{0.00} & \msd{89.01}{0.31} & \msd{49.34}{3.14} & 0.00 \\
\cmidrule(lr){1-16}
Fine-tune & \msd{99.19}{0.06} & \msd{99.43}{0.25} & \msd{97.81}{0.23} & \msd{48.99}{0.77} & 0.36 & \msdb{99.52}{0.10} & \msd{92.67}{1.14} & \msd{98.34}{0.23} & \msd{46.79}{3.92} & 26.50 & \msdb{99.55}{0.06} & \msd{93.10}{1.55} & \msd{98.37}{0.22} & \msd{48.57}{2.26} & 25.92 \\
SalUn & \msd{97.92}{0.30} & \msd{86.93}{2.49} & \msd{96.32}{0.33} & \msd{47.09}{5.48} & 4.65 & \msd{98.37}{0.94} & \msd{7.47}{2.17} & \msd{88.91}{0.47} & \msd{50.47}{0.21} & 2.10 & \msd{99.38}{0.26} & \msd{7.04}{2.17} & \msd{89.61}{0.54} & \msd{50.24}{0.21} & 2.20 \\
MoDe & \msd{99.18}{0.10} & \msd{99.57}{0.43} & \msd{98.71}{0.27} & \msd{47.75}{1.84} & 0.48 & \msd{96.46}{0.44} & \msd{0.00}{0.00} & \msd{86.93}{0.88} & \msd{48.75}{2.36} & 1.57 & \msd{96.70}{0.46} & \msd{0.00}{0.00} & \msd{87.01}{0.97} & \msd{47.68}{1.66} & 1.52 \\
FedOSD & \msd{99.09}{0.07} & \msd{99.57}{0.43} & \msdb{98.83}{0.32} & \msd{47.25}{3.82} & 0.65 & \msd{99.14}{0.06} & \msd{99.28}{0.50} & \msdb{98.83}{0.28} & \msd{50.70}{3.59} & 27.39 & \msd{99.14}{0.06} & \msd{99.28}{0.50} & \msdb{98.78}{0.29} & \msd{50.11}{4.06} & 27.46 \\
NoT & \msd{97.88}{0.41} & \msd{96.26}{3.03} & \msd{97.30}{1.13} & \msd{47.54}{4.20} & 1.97 & \msd{98.68}{0.18} & \msd{0.00}{0.00} & \msd{88.68}{0.33} & \msd{51.77}{2.59} & 0.54 & \msd{98.86}{0.30} & \msd{1.87}{2.21} & \msd{88.98}{0.50} & \msd{49.48}{4.11} & 0.58 \\
FedCARE & \msdb{99.22}{0.01} & \msd{99.43}{0.25} & \msd{98.80}{0.17} & \msd{48.45}{3.92} & 0.27 & \msdb{99.52}{0.21} & \msd{87.81}{2.02} & \msd{93.83}{0.26} & \msd{48.04}{4.51} & 23.85 & \msd{99.34}{0.34} & \msd{91.81}{3.42} & \msd{98.23}{0.22} & \msd{48.00}{4.47} & 25.65 \\
FedMemPrune & \msd{99.18}{0.05} & \msd{99.28}{0.25} & \msd{98.20}{0.24} & \msd{49.27}{3.44} & 0.23 & \msd{99.13}{0.15} & \msd{1.29}{1.88} & \msd{89.08}{0.20} & \msd{48.31}{0.81} & 0.86 & \msd{99.11}{0.09} & \msd{0.86}{1.14} & \msd{88.99}{0.27} & \msd{52.32}{2.01} & 0.97 \\
\specialrule{0.45pt}{2pt}{1pt}
\rowcolor{black!5} \textbf{SURE (Ours)} & \msd{99.20}{0.03} & \msd{99.43}{0.25} & \msd{98.77}{0.27} & \msd{48.79}{1.30} & \textbf{0.18} & \msd{99.06}{0.15} & \msd{0.00}{0.00} & \msd{88.96}{0.28} & \msd{50.53}{1.19} & \textbf{0.06} & \msd{99.10}{0.11} & \msd{0.00}{0.00} & \msd{88.98}{0.29} & \msd{50.14}{1.75} & \textbf{0.21} \\
\specialrule{0.45pt}{1pt}{2pt}
\addlinespace[1pt]
\multicolumn{16}{c}{\textit{CIFAR-10 / ResNet-18}} \\
\cmidrule(lr){1-16}
\textit{Retrain} & \msd{81.14}{1.99} & \msd{33.67}{7.25} & \msd{72.96}{1.55} & \msd{50.77}{1.16} & 0.00 & \msd{81.29}{2.50} & \msd{0.00}{0.00} & \msd{69.45}{1.45} & \msd{51.10}{0.70} & 0.00 & \msd{81.18}{2.80} & \msd{0.00}{0.00} & \msd{69.55}{2.11} & \msd{51.82}{3.85} & 0.00 \\
\cmidrule(lr){1-16}
Fine-tune & \msd{80.49}{2.05} & \msd{41.90}{3.36} & \msd{70.84}{1.92} & \msd{53.70}{0.44} & 3.48 & \msd{81.42}{2.74} & \msd{0.00}{0.00} & \msd{69.44}{1.67} & \msd{49.82}{1.34} & \textbf{0.36} & \msd{83.13}{1.85} & \msd{0.00}{0.00} & \msd{71.12}{1.16} & \msd{51.43}{2.72} & 0.98 \\
SalUn & \msd{71.50}{7.87} & \msd{25.70}{9.05} & \msd{64.92}{5.06} & \msd{51.65}{2.79} & 6.63 & \msd{55.39}{8.54} & \msd{0.00}{0.00} & \msd{48.56}{6.92} & \msd{50.26}{2.86} & 11.91 & \msd{31.24}{11.74} & \msd{0.00}{0.00} & \msd{28.00}{10.32} & \msd{50.78}{1.68} & 23.13 \\
MoDe & \msd{71.90}{2.25} & \msd{28.77}{0.76} & \msd{65.54}{1.23} & \msd{51.22}{1.05} & 5.50 & \msd{47.19}{3.57} & \msd{0.00}{0.00} & \msd{41.14}{2.68} & \msd{53.33}{1.13} & 16.16 & \msd{29.51}{3.93} & \msd{0.00}{0.00} & \msd{26.59}{3.71} & \msd{51.21}{1.38} & 23.81 \\
FedOSD & \msd{78.56}{2.95} & \msd{41.63}{5.88} & \msd{70.38}{1.07} & \msd{53.82}{1.68} & 4.04 & \msd{82.03}{1.69} & \msd{0.00}{0.00} & \msd{70.34}{1.36} & \msd{51.31}{3.70} & 0.46 & \msd{82.40}{1.87} & \msd{0.00}{0.00} & \msd{70.64}{1.54} & \msd{51.69}{4.10} & \textbf{0.61} \\
NoT & \msd{73.89}{3.06} & \msd{35.03}{3.56} & \msd{68.40}{1.74} & \msd{50.21}{1.54} & 3.43 & \msd{77.82}{1.11} & \msd{0.00}{0.00} & \msd{65.76}{0.77} & \msd{50.97}{0.24} & 1.82 & \msd{83.59}{1.09} & \msd{0.00}{0.00} & \msdb{72.79}{0.48} & \msd{52.24}{2.27} & 1.52 \\
FedCARE & \msd{80.79}{2.19} & \msd{78.90}{2.99} & \msdb{77.17}{1.86} & \msd{51.16}{3.54} & 12.55 & \msd{82.11}{1.76} & \msd{15.30}{16.47} & \msdb{71.96}{2.89} & \msd{52.71}{2.42} & 5.06 & \msd{82.13}{1.79} & \msd{15.50}{16.44} & \msd{71.99}{2.87} & \msd{52.82}{2.69} & 4.97 \\
FedMemPrune & \msd{77.56}{3.15} & \msd{43.33}{10.66} & \msd{71.38}{1.77} & \msd{52.88}{3.13} & 4.23 & \msd{77.66}{4.88} & \msd{0.00}{0.00} & \msd{66.54}{3.53} & \msd{50.79}{1.65} & 1.71 & \msd{75.46}{4.27} & \msd{0.00}{0.00} & \msd{64.96}{3.82} & \msd{52.35}{0.27} & 2.71 \\
\specialrule{0.45pt}{2pt}{1pt}
\rowcolor{black!5} \textbf{SURE (Ours)} & \msdb{81.74}{1.83} & \msd{41.60}{5.24} & \msd{73.99}{1.84} & \msd{53.74}{1.27} & \textbf{3.13} & \msdb{83.87}{1.85} & \msd{0.23}{0.40} & \msd{71.24}{1.07} & \msd{52.73}{1.71} & 1.56 & \msdb{83.72}{1.44} & \msd{0.10}{0.17} & \msd{71.33}{1.08} & \msd{51.66}{0.33} & 1.15 \\
\bottomrule
\end{tabular}%
}
\endgroup
\end{table*}

\section{Experimental Evaluation}
\label{sec:experiments}


We systematically evaluate SURE and adapted unlearning methods under SFU, examining forgetting--utility trade-offs, model partitioning, and system efficiency.

\subsection{Experimental Setup}
\label{sec:exp_setup}

\paragraph{Datasets and configurations.}
We consider two unlearning scenarios: \emph{client-wise unlearning}, which removes the contributions of a designated client, and \emph{class-wise unlearning}, which removes a target class across clients. Experiments cover three datasets and four dataset--model configurations: MNIST/LeNet, CIFAR-10/CNN, CIFAR-10/ResNet-18, and CIFAR-100/ResNet-18. For client-wise unlearning, we consider ten clients with one removal request under Independent and Identically Distributed (IID) and extreme non-IID data partitions, the latter assigning disjoint label subsets to clients. 
For class-wise unlearning, we remove a designated class across clients under Dirichlet-based non-IID partitions ($\alpha=0.1$)~\cite{hsu2019measuring}.
Both scenarios are examined under Client-only, Server-only, and Joint update modes. ResNet-18 is partitioned after its first residual block, with additional cut positions examined separately. Further details on data partitioning, model architectures, and training configurations are provided in Appendix~\ref{app:experimental-settings}.

\paragraph{Baselines and retraining references.}
We compare SURE with retained-data fine-tuning (Fine-tune) and eight unlearning methods adapted to SFL: FedEraser~\cite{liu2021federaser}, FedRecovery~\cite{zhang2023fedrecovery}, SalUn~\cite{fan2024salun}, MoDe~\cite{zhao2024mode}, FedOSD~\cite{pan2025fedosd}, NoT~\cite{khalil2025not}, FedCARE~\cite{li2026fedcare}, and FedMemPrune~\cite{wei2026fedmemprune}. All methods follow the same SFL architecture and model update constraints, with adaptations detailed in Appendix~\ref{app:sfl-baseline-adaptation}. We use mode-matched Retrain as the evaluation reference. For Client-only and Server-only, the updatable partition is reinitialized and retrained on retained data, while the other partition remains fixed at its trained state. Joint Retrain reinitializes and retrains both partitions on retained data, corresponding to the SFL retraining reference defined in Section~\ref{sec:problem_formulation}. Further details are provided in Appendix~\ref{app:experimental-settings} and~\ref{app:sfl-baseline-adaptation}.


\paragraph{Evaluation Protocol.}
We report retained-training accuracy (Retain), forgetting-target accuracy (Forget), global test accuracy (Test), and membership-inference ROC-AUC (MIA). Forget is evaluated on the removed client's training samples for client-wise unlearning and target-class test samples for class-wise unlearning. Agreement with mode-matched Retrain is measured by
$
\mathrm{Avg.\ Gap}=
\frac{1}{4}\sum_{q\in\mathcal{J}}
|q(\theta^U)-q(\theta_{\mathcal{M}}^R)|,
\label{eq:experimental-score}
$
where $\mathcal{J}=\{\mathrm{Retain},\mathrm{Forget},\mathrm{Test},\mathrm{MIA}\}$ and $\theta_{\mathcal{M}}^R$ is the mode-matched retraining reference. Lower gaps indicate closer agreement, measured in percentage points. We also report communication (bytes) and computation (FLOPs). Results are averaged over three runs with standard deviations, using at most 40 recovery rounds for SURE. Further evaluation details appear in Appendices~\ref{app:mia} and~\ref{app:metric-cost-protocol}.

\begin{figure*}[t]
\centering
\includegraphics[width=\textwidth]{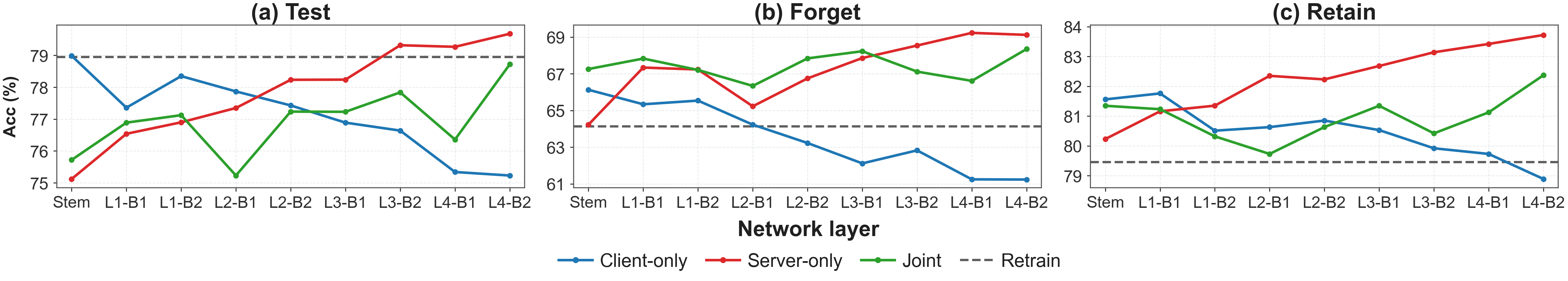}
\setlength{\abovecaptionskip}{-5pt}

\caption{Impact of split position on client-wise unlearning with SURE on CIFAR-10/ResNet-18. Panels show (a) Test, (b) Forget, and (c) Retain accuracy across nine split positions. Colored curves represent Client-only, Server-only, and Joint updates, while dashed lines indicate mode-matched Retrain references.}

\label{fig:sure-split-performance}
\vspace{-1.0em}
\end{figure*}

\begin{figure*}[t]
\centering
\includegraphics[width=\textwidth]{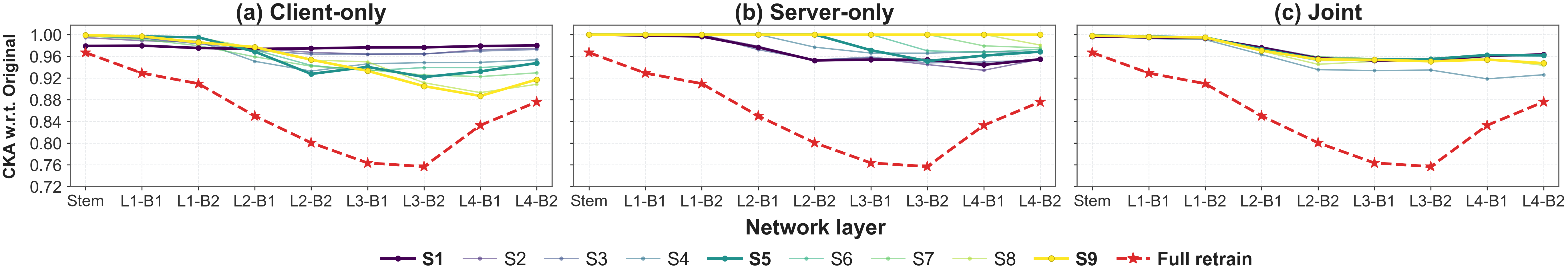}
\setlength{\abovecaptionskip}{-5pt}
\caption{Layer-wise CKA across nine split positions for client-wise unlearning on CIFAR-10/ResNet-18. The three panels correspond to Client-only, Server-only, and Joint updates. Colored curves represent different split positions, while the red dashed curve denotes Retrain. All CKA values are computed relative to Original.}
\label{fig:sure-split-cka}
\vspace{-1.1em}
\end{figure*}

\providecommand{\msd}[2]{\ensuremath{#1_{\scriptscriptstyle\pm #2}}}
\providecommand{\msdb}[2]{\ensuremath{\mathbf{#1}_{\scriptscriptstyle\pm #2}}}

\begin{table*}[t]
\centering
\caption{SURE ablation on CIFAR-10/ResNet-18 for client-wise unlearning (Dirichlet $\alpha=0.1$, split 2). 
A0 denotes full SURE; A1--A7 are described in Section~\ref{sec:exp_ablation}.
 Results are mean $\pm$ SD (\%). Ret. and Test ($\uparrow$) denote predictive utility, while For. and MIA are interpreted relative to mode-matched Retrain. AG (Avg. Gap) ($\downarrow$, pp) is the mean absolute difference from Retrain across the four metrics. Bold marks the highest non-Retrain utility or lowest gap.}
\label{tab:sure-ablation}
\begingroup
\small
\setlength{\tabcolsep}{1pt}
\renewcommand{\arraystretch}{0.1}
\resizebox{\textwidth}{!}{%
\begin{tabular}{@{}l*{15}{c}@{}}
\toprule
 & \multicolumn{5}{c}{Client-only}
 & \multicolumn{5}{c}{Server-only}
 & \multicolumn{5}{c}{Joint} \\
\cmidrule(lr){2-6}
\cmidrule(lr){7-11}
\cmidrule(l){12-16}
Method
& Ret.$\uparrow$ & For. & Test$\uparrow$ & MIA & AG$\downarrow$
& Ret.$\uparrow$ & For. & Test$\uparrow$ & MIA & AG$\downarrow$
& Ret.$\uparrow$ & For. & Test$\uparrow$ & MIA & AG$\downarrow$ \\
\midrule
\textit{Retrain}
& \msd{80.67}{1.96} & \msd{68.14}{1.08} & \msd{76.38}{1.67} & \msd{50.60}{2.01} & 0.00
& \msd{79.67}{1.48} & \msd{65.37}{3.33} & \msd{75.11}{0.82} & \msd{49.67}{1.24} & 0.00
& \msd{79.46}{2.47} & \msd{64.13}{2.32} & \msd{75.77}{1.86} & \msd{51.12}{0.23} & 0.00 \\
\cmidrule(lr){1-16}
A1
& \msd{80.99}{2.46} & \msd{66.00}{4.95} & \msd{76.53}{1.84} & \msd{51.13}{1.35} & 0.79
& \msd{81.38}{1.20} & \msd{71.80}{4.75} & \msd{77.00}{1.34} & \msd{50.51}{4.29} & 2.72
& \msd{80.06}{5.67} & \msd{69.74}{1.42} & \msd{75.86}{3.97} & \msd{49.22}{3.29} & 2.05 \\
A2
& \msdb{81.23}{2.50} & \msd{67.83}{3.74} & \msdb{76.89}{2.09} & \msd{50.87}{2.15} & \textbf{0.41}
& \msd{81.03}{2.31} & \msd{70.93}{1.17} & \msd{76.12}{1.59} & \msd{49.54}{1.05} & 2.02
& \msdb{82.20}{2.46} & \msd{70.59}{2.44} & \msd{77.35}{1.36} & \msd{52.12}{1.86} & 2.95 \\
A3
& \msd{79.40}{1.48} & \msd{65.24}{3.92} & \msd{75.77}{1.72} & \msd{50.24}{2.92} & 1.29
& \msd{79.62}{5.68} & \msd{70.61}{3.16} & \msd{74.70}{4.84} & \msd{51.28}{2.75} & 1.83
& \msd{79.81}{3.19} & \msd{70.60}{2.71} & \msd{75.87}{2.44} & \msd{49.71}{2.95} & 2.08 \\
A4
& \msd{78.75}{1.55} & \msd{64.91}{4.49} & \msd{75.20}{2.00} & \msd{51.05}{2.10} & 1.70
& \msd{81.68}{2.62} & \msd{73.00}{3.31} & \msd{76.77}{1.49} & \msd{50.47}{1.20} & 3.03
& \msd{82.13}{1.54} & \msd{71.21}{3.82} & \msd{77.37}{1.18} & \msd{49.74}{3.21} & 3.18 \\
A5
& \msd{80.87}{2.66} & \msd{66.25}{2.29} & \msd{76.16}{1.99} & \msd{51.22}{2.11} & 0.73
& \msd{81.98}{2.05} & \msd{71.03}{1.82} & \msdb{77.03}{1.48} & \msd{49.45}{2.77} & 2.53
& \msd{81.40}{2.37} & \msd{66.85}{2.12} & \msd{76.99}{2.14} & \msd{48.75}{2.96} & 2.06 \\
A6
& \msd{80.97}{2.72} & \msd{66.43}{3.39} & \msd{76.36}{2.20} & \msd{50.21}{1.67} & 0.61
& \msdb{82.04}{2.23} & \msd{71.20}{5.40} & \msd{76.78}{0.84} & \msd{51.21}{1.94} & 2.85
& \msd{76.77}{5.58} & \msd{65.41}{1.47} & \msd{72.42}{3.77} & \msd{49.29}{1.84} & 2.29 \\
A7
& \msd{80.42}{2.08} & \msd{67.03}{3.83} & \msd{76.36}{2.00} & \msd{52.96}{0.09} & 0.94
& \msd{78.31}{4.46} & \msd{70.27}{4.50} & \msd{73.75}{2.96} & \msd{50.98}{0.47} & 2.23
& \msdb{82.20}{1.63} & \msd{70.11}{2.01} & \msdb{77.40}{1.02} & \msd{51.80}{1.11} & 2.76 \\
\specialrule{0.45pt}{2pt}{1pt}
\rowcolor{black!5}
\textbf{A0 (SURE)}
& \msdb{81.23}{2.50} & \msd{67.83}{3.74} & \msdb{76.89}{2.09} & \msd{50.87}{2.15} & \textbf{0.41}
& \msd{81.16}{1.99} & \msd{67.43}{4.39} & \msd{76.54}{4.38} & \msd{51.22}{1.33} & \textbf{1.63}
& \msd{81.76}{2.17} & \msd{65.34}{4.93} & \msd{77.36}{1.22} & \msd{48.61}{1.94} & \textbf{1.90} \\
\bottomrule
\end{tabular}%
}
\endgroup
\vspace{-0.3em}
\end{table*}

\subsection{Unlearning Performance Comparison}
\label{sec:exp_comparison}


\textbf{\textit{Client-Wise Unlearning.}}
Under extreme non-IID partitions, adapted methods exhibit substantial forgetting--utility trade-offs (Table~\ref{tab:clientwise-extreme}). On CIFAR-10/ResNet-18 with Server-only updates, FedRecovery retains 68.70\% Forget accuracy, while SalUn and MoDe reduce Test accuracy to 13.86\% and 21.58\%, respectively. SURE achieves 0.40\% Forget, 56.30\% Test, and an Avg. Gap of 1.76 percentage points (pp) relative to Retrain. Under Joint updates on CIFAR-10/CNN, SURE attains the highest non-reference Retain (44.22\%) and Test (39.47\%) accuracy, with 0.00\% Forget and the lowest Avg. Gap (2.08 pp). Results on CIFAR-100/ResNet-18 reveal remaining performance gaps, underscoring the influence of model architecture and update mode. Additional IID and multiple-client removal results are provided in Appendices~\ref{app:additional-iid} (Table~\ref{tab:clientwise-iid}) and~\ref{app:additional-multiclient} (Table~\ref{tab:sure-many}), respectively.


\textbf{\textit{Class-Wise Unlearning.}}
Class-wise removal similarly exposes challenges in balancing forgetting and retained utility (Table~\ref{tab:classwise-main}). On MNIST/LeNet with Server-only updates, Fine-tune and FedOSD retain high target-class accuracy (92.67\% and 99.28\%), whereas SURE matches Retrain's 0.00\% Forget and 88.96\% Test accuracy, achieving the lowest Avg. Gap (0.06 pp). On CIFAR-10/ResNet-18, SURE achieves the highest Retain accuracy across all three modes. Under Joint updates, it maintains 83.72\% Retain and 71.33\% Test accuracy with 0.10\% Forget and an Avg. Gap of 1.15 pp, although FedOSD achieves a smaller gap (0.61 pp). These findings demonstrate competitive retraining agreement while highlighting the mode-dependent limitations of unlearning in SFL.

\subsection{Impact of Split Position on Unlearning}
\label{sec:exp_partitioning}


Split position determines the trainable model partitions under different update modes, thereby influencing unlearning behavior. We evaluate SURE across nine ResNet-18 split positions (Fig.~\ref{fig:sure-split-performance}). As the split deepens, Client-only updates exhibit declining Test and Retain accuracy, whereas Server-only updates show the opposite trend. Joint updates exhibit non-monotonic behavior, while Forget accuracy varies across modes. To examine the representation changes, Fig.~\ref{fig:sure-split-cka} reports layer-wise Centered Kernel Alignment (CKA)~\cite{kornblith2019similarity} relative to Original. Final-stage CKA decreases from 0.980 to 0.908 between splits 1 and 8 under Client-only updates, but increases from 0.954 to 0.980 under Server-only updates. Additional t-SNE visualizations in Appendix~\ref{app:additional-representations} (Fig.~\ref{fig:sure-clientwise-tsne} and Fig.~\ref{fig:sure-cifar-rep}) illustrate representation differences across split positions and model partitions. These results highlight the interplay between split placement and update constraints, although representation changes alone do not establish forgetting effectiveness.

\begin{table}[t]
\centering
\caption{Communication and computation costs for client-wise unlearning under extreme non-IID partitions. Communication is in GB ($10^9$ bytes) and computation in TFLOPs ($10^{12}$ operations); lower is better.
}
\label{tab:sfu-efficiency}
\begingroup
\small
\setlength{\tabcolsep}{3pt}
\renewcommand{\arraystretch}{0.3}
\begin{tabular}{@{}l*{6}{c}@{}}
\toprule
 & \multicolumn{2}{c}{Client-only} & \multicolumn{2}{c}{Server-only} & \multicolumn{2}{c}{Joint} \\
\cmidrule(lr){2-3}\cmidrule(lr){4-5}\cmidrule(l){6-7}
Method & Comm.$\downarrow$ & FLOPs$\downarrow$ & Comm.$\downarrow$ & FLOPs$\downarrow$ & Comm.$\downarrow$ & FLOPs$\downarrow$ \\
\midrule
\multicolumn{7}{c}{\textit{CIFAR-10 / CNN}} \\
\cmidrule(lr){1-7}
Fine-tune & 53.3 & 333 & 26.5 & 218 & 53.3 & 379 \\
FedEraser & 53.5 & 333 & 26.5 & 218 & 53.5 & 379 \\
FedRecovery & 0.189 & 0.00707 & 0.161 & 0.00707 & 0.189 & 0.00707 \\
SalUn & 59.1 & 371 & 29.6 & 244 & 59.1 & 422 \\
MoDe & 112 & 703 & 56.0 & 461 & 112 & 800 \\
FedOSD & 53.3 & 333 & 26.5 & 218 & 53.3 & 379 \\
NoT & 53.3 & 333 & 26.5 & 218 & 53.3 & 379 \\
FedCARE & 53.4 & 335 & 26.6 & 220 & 53.4 & 381 \\
FedMemPrune & 159 & 999 & 79.6 & 655 & 159 & 1{,}140 \\
\specialrule{0.45pt}{2pt}{1pt}
\rowcolor{black!5} \textbf{SURE (Ours)} & 54.7 & 344 & 28.0 & 229 & 54.7 & 390 \\
\midrule
\multicolumn{7}{c}{\textit{CIFAR-10 / ResNet-18}} \\
\cmidrule(lr){1-7}
Fine-tune & 213 & 980 & 106 & 1{,}233 & 213 & 1{,}370 \\
FedEraser & 213 & 980 & 106 & 1{,}233 & 213 & 1{,}370 \\
FedRecovery & 0.218 & 0.274 & 0.183 & 0.274 & 0.218 & 0.274 \\
SalUn & 237 & 1{,}093 & 119 & 1{,}373 & 237 & 1{,}525 \\
MoDe & 449 & 2{,}070 & 224 & 2{,}603 & 449 & 2{,}890 \\
FedOSD & 213 & 980 & 106 & 1{,}233 & 213 & 1{,}370 \\
NoT & 213 & 980 & 106 & 1{,}233 & 213 & 1{,}370 \\
FedCARE & 213 & 986 & 106 & 1{,}240 & 213 & 1{,}375 \\
FedMemPrune & 637 & 2{,}941 & 319 & 3{,}699 & 637 & 4{,}106 \\
\specialrule{0.45pt}{2pt}{1pt}
\rowcolor{black!5} \textbf{SURE (Ours)} & 218 & 1{,}020 & 112 & 1{,}270 & 218 & 1{,}370 \\
\bottomrule
\end{tabular}%
\endgroup
\vspace{-1em}
\end{table}

\subsection{Ablation Study}
\label{sec:exp_ablation}


We evaluate seven ablation variants (Table~\ref{tab:sure-ablation}): A1 removes magnitude weighting; A2 fixes the first trainable layer; A3 disables retained-importance protection ($\gamma=0$); A4 applies hard layer negation; 
A5 uses standard retained-data FedAvg recovery; and A6/A7 retain only forgotten-importance suppression or retained-importance amplification, respectively. Full SURE achieves the lowest Avg. Gap under Server-only (1.63 pp) and Joint (1.90 pp) updates, tying with A2 under Client-only (0.41 pp). A1 and A4 consistently increase the gap, while recovery variants reveal distinct forgetting--utility trade-offs. Under Joint updates, A6 reduces Test accuracy to 72.42\%, whereas A7 increases Forget accuracy to 70.11\%. In contrast, SURE achieves 77.36\% Test and 65.34\% Forget accuracy, approaching Retrain's 64.13\% Forget accuracy. These results highlight the complementary roles of importance estimation, selective perturbation, and guided recovery. 
Hyperparameter sensitivity is further examined in Appendix~\ref{app:additional-sensitivity} (Figure~\ref{fig:sure-sensitivity}).

\subsection{Efficiency Analysis}
\label{sec:exp_efficiency}


We compare communication and computation costs across three update modes (Table~\ref{tab:sfu-efficiency}). SURE incurs modest overhead over Fine-tune while remaining substantially more efficient than MoDe and FedMemPrune. On CIFAR-10/ResNet-18 under Joint updates, SURE requires 218 GB and 1,370 TFLOPs, compared with 213 GB and 1,370 TFLOPs for Fine-tune and 449 GB and 2,890 TFLOPs for MoDe. FedRecovery achieves substantially lower costs through training-free correction using stored historical updates, although historical storage costs are excluded~\cite{zhang2023fedrecovery}. However, lower costs do not necessarily imply closer agreement with Retrain. 

\section{Limitations}

SURE provides an initial approach to SFU, with effectiveness varying across model update modes, split positions, and forgetting scenarios. Its recovery stage requires retained data, and formal forgetting guarantees remain unexplored. Moreover, agreement with mode-matched retraining under restricted updates does not necessarily imply equivalence to full retraining. Future work will explore more scalable unlearning methods, broader SFL architectures, and stronger forgetting guarantees.

\section{Conclusion}



We establish \emph{Split Federated Unlearning} (SFU) as a distinct research problem, providing a unified formulation and theoretical characterization of historical influence propagation and mode-dependent unlearning feasibility. As an initial solution, SURE integrates dual-importance estimation, selective perturbation, and guided recovery. Extensive experiments across diverse model architectures, data distributions, and update modes reveal the forgetting--utility challenges of existing methods adapted to SFL, while SURE achieves competitive agreement with mode-matched retraining at modest overhead, laying a foundation for principled unlearning in split federated systems.

\bibliographystyle{ACM-Reference-Format}
\bibliography{references}

\appendix

\section{Additional Experimental Results}
\label{app:additional-experimental-results}

\subsection{Client-Wise Unlearning under IID Partitions}
\label{app:additional-iid}


Under IID partitions, retained clients share the same label space, so forgotten-sample accuracy may remain high even after retraining. As shown in Table~\ref{tab:clientwise-iid}, SURE preserves competitive utility across architectures and update modes. On CIFAR-10/ResNet-18 under Joint updates, SURE achieves 88.01\% Test accuracy, close to Retrain (87.60\%), whereas FedEraser and MoDe drop to 57.83\% and 46.78\%. Nevertheless, deviations in Forget accuracy and MIA, particularly on CIFAR-100, highlight the distinction between utility preservation and retraining agreement.

\providecommand{\msd}[2]{\ensuremath{#1_{\scriptscriptstyle\pm #2}}}
\providecommand{\msdb}[2]{\ensuremath{\mathbf{#1}_{\scriptscriptstyle\pm #2}}}
\begin{table*}[t]
\centering
\caption{Client-wise unlearning under IID partitions (mean $\pm$ SD, \%). AG (Avg. Gap)$\downarrow$ is the mean absolute distance of Ret., For., Test, and MIA from mode-matched Retrain (percentage points), computed from reported means. Ret./Test$\uparrow$: higher is better. Bold marks the best non-reference Ret./Test or AG within each dataset--model and update mode. 
}
\label{tab:clientwise-iid}
\begingroup
\small
\setlength{\tabcolsep}{1pt}
\renewcommand{\arraystretch}{0.1}
\resizebox{\textwidth}{!}{%
\begin{tabular}{@{}l*{15}{c}@{}}
\toprule
& \multicolumn{5}{c}{Client-only} & \multicolumn{5}{c}{Server-only} & \multicolumn{5}{c}{Joint} \\
\cmidrule(lr){2-6}\cmidrule(lr){7-11}\cmidrule(l){12-16}
Method & Ret.$\uparrow$ & For. & Test$\uparrow$ & MIA & AG$\downarrow$ & Ret.$\uparrow$ & For. & Test$\uparrow$ & MIA & AG$\downarrow$ & Ret.$\uparrow$ & For. & Test$\uparrow$ & MIA & AG$\downarrow$ \\
\specialrule{0.45pt}{1pt}{2pt}
\addlinespace[1pt]
\multicolumn{16}{c}{\textit{CIFAR-10 / CNN}} \\
\cmidrule(lr){1-16}
\textit{Retrain} & \msd{84.62}{0.18} & \msd{81.72}{0.80} & \msd{81.09}{0.23} & \msd{49.53}{1.50} & 0.00 & \msd{85.32}{0.38} & \msd{81.50}{0.82} & \msd{81.74}{0.24} & \msd{50.42}{0.52} & 0.00 & \msd{84.88}{0.72} & \msd{81.28}{0.82} & \msd{81.54}{0.63} & \msd{49.80}{1.24} & 0.00 \\
\cmidrule(lr){1-16}
Fine-tune & \msd{85.23}{1.10} & \msd{84.14}{1.56} & \msd{82.01}{0.60} & \msd{50.79}{1.00} & 1.30 & \msd{86.06}{1.32} & \msd{83.48}{1.24} & \msd{81.85}{0.92} & \msd{51.73}{1.16} & 1.04 & \msd{87.23}{0.24} & \msd{85.46}{1.36} & \msd{81.98}{0.37} & \msd{48.82}{1.52} & 1.99 \\
FedEraser & \msd{82.00}{2.00} & \msd{82.38}{0.43} & \msd{79.87}{1.47} & \msd{50.48}{1.52} & 1.36 & \msd{82.76}{1.30} & \msd{80.40}{0.48} & \msd{79.91}{1.42} & \msd{50.42}{1.32} & 1.37 & \msd{75.12}{2.42} & \msd{71.59}{0.33} & \msd{72.65}{2.35} & \msd{51.37}{1.36} & 7.48 \\
FedRecovery & \msd{84.84}{0.36} & \msd{85.02}{0.72} & \msd{81.55}{0.27} & \msd{50.49}{0.84} & 1.24 & \msd{84.84}{1.25} & \msd{85.02}{0.42} & \msd{81.56}{1.12} & \msd{50.43}{0.94} & 1.05 & \msd{84.84}{0.36} & \msd{85.02}{0.53} & \msd{81.56}{0.73} & \msd{50.49}{0.35} & 1.12 \\
SalUn & \msd{81.88}{2.62} & \msd{82.60}{1.75} & \msd{79.78}{1.73} & \msd{50.56}{3.13} & 1.49 & \msd{83.35}{1.22} & \msd{82.72}{0.55} & \msd{79.64}{1.03} & \msd{49.16}{4.12} & 1.64 & \msd{86.45}{1.62} & \msd{85.80}{1.35} & \msd{82.88}{1.35} & \msd{48.95}{2.35} & 2.07 \\
MoDe & \msd{71.97}{6.38} & \msd{71.37}{6.11} & \msd{70.86}{6.18} & \msd{49.57}{0.52} & 8.32 & \msd{78.05}{3.71} & \msd{77.31}{2.51} & \msd{75.74}{2.08} & \msd{50.71}{0.92} & 4.44 & \msd{47.55}{3.52} & \msd{51.98}{2.25} & \msd{48.44}{1.25} & \msd{49.03}{0.35} & 25.13 \\
FedOSD & \msd{84.69}{0.49} & \msd{85.24}{1.11} & \msd{81.58}{0.14} & \msd{51.11}{1.86} & 1.42 & \msd{84.71}{0.69} & \msd{84.80}{1.41} & \msd{81.51}{0.44} & \msd{51.19}{1.06} & 1.23 & \msd{84.74}{0.29} & \msd{85.02}{1.26} & \msd{81.44}{0.35} & \msd{51.19}{1.25} & 1.34 \\
NoT & \msd{83.54}{0.37} & \msd{85.40}{1.45} & \msdb{82.15}{0.19} & \msd{51.19}{2.24} & 1.87 & \msd{82.71}{0.92} & \msd{80.28}{1.72} & \msd{79.56}{1.18} & \msd{49.29}{1.04} & 1.79 & \msd{85.18}{0.36} & \msd{84.04}{0.26} & \msd{81.74}{0.24} & \msd{50.01}{1.94} & 0.87 \\
FedCARE & \msd{84.71}{0.35} & \msd{83.48}{0.68} & \msd{81.51}{0.30} & \msd{51.03}{1.04} & \textbf{0.94} & \msd{83.66}{0.65} & \msd{84.48}{1.48} & \msd{80.56}{0.33} & \msd{51.75}{1.08} & 1.79 & \msd{84.79}{0.72} & \msd{83.48}{0.48} & \msd{81.51}{0.54} & \msd{50.57}{0.54} & \textbf{0.77} \\
FedMemPrune & \msd{75.63}{0.74} & \msd{75.77}{0.29} & \msd{73.67}{0.53} & \msd{49.52}{1.31} & 5.59 & \msd{83.42}{1.79} & \msd{81.50}{0.33} & \msd{80.01}{0.43} & \msd{50.23}{0.84} & 0.96 & \msd{72.75}{0.32} & \msd{73.35}{0.59} & \msd{71.21}{0.67} & \msd{49.80}{1.21} & 7.60 \\
\specialrule{0.45pt}{2pt}{1pt}
\rowcolor{black!5} \textbf{SURE (Ours)} & \msdb{85.40}{0.50} & \msd{85.46}{0.90} & \msd{82.05}{0.23} & \msd{51.38}{1.83} & 1.83 & \msdb{86.15}{1.25} & \msd{82.60}{0.80} & \msdb{82.16}{0.33} & \msd{51.38}{1.29} & \textbf{0.83} & \msdb{87.40}{0.66} & \msd{86.56}{0.47} & \msdb{83.18}{0.57} & \msd{50.69}{1.56} & 2.58 \\
\specialrule{0.45pt}{1pt}{2pt}
\addlinespace[1pt]
\multicolumn{16}{c}{\textit{CIFAR-10 / ResNet-18}} \\
\cmidrule(lr){1-16}
\textit{Retrain} & \msd{96.26}{0.38} & \msd{95.37}{0.23} & \msd{87.48}{0.36} & \msd{54.05}{1.47} & 0.00 & \msd{95.92}{0.47} & \msd{89.43}{0.92} & \msd{86.79}{0.43} & \msd{48.32}{0.93} & 0.00 & \msd{96.85}{0.35} & \msd{88.25}{0.93} & \msd{87.60}{0.45} & \msd{49.03}{1.26} & 0.00 \\
\cmidrule(lr){1-16}
Fine-tune & \msd{96.36}{0.14} & \msd{97.04}{0.96} & \msd{87.57}{0.64} & \msd{53.69}{1.27} & 0.56 & \msd{96.24}{1.19} & \msd{93.29}{1.90} & \msdb{87.85}{1.21} & \msd{52.39}{1.43} & 2.33 & \msd{94.97}{0.97} & \msd{93.07}{1.24} & \msd{88.78}{0.26} & \msd{53.70}{1.36} & 3.14 \\
FedEraser & \msd{93.72}{1.45} & \msd{94.71}{1.23} & \msd{85.35}{2.98} & \msd{52.83}{1.25} & 1.64 & \msd{77.75}{1.20} & \msd{76.65}{0.92} & \msd{76.96}{1.52} & \msd{48.31}{1.02} & 10.20 & \msd{58.13}{1.88} & \msd{56.97}{0.36} & \msd{57.83}{1.84} & \msd{50.63}{1.21} & 25.34 \\
FedRecovery & \msdb{96.43}{1.32} & \msd{97.14}{0.80} & \msd{87.81}{1.22} & \msd{54.48}{0.67} & 0.68 & \msd{96.43}{1.06} & \msd{97.14}{1.02} & \msd{87.81}{0.22} & \msd{54.41}{0.94} & 3.83 & \msd{96.21}{0.46} & \msd{96.84}{0.36} & \msd{87.83}{0.24} & \msd{55.10}{0.62} & 3.88 \\
SalUn & \msd{93.41}{1.32} & \msd{92.95}{1.99} & \msd{85.15}{0.56} & \msd{53.38}{2.11} & 2.07 & \msd{91.55}{2.12} & \msd{88.55}{1.28} & \msd{84.68}{1.25} & \msd{50.40}{2.36} & 2.36 & \msd{91.92}{2.26} & \msd{85.44}{0.35} & \msd{84.34}{1.62} & \msd{52.12}{2.24} & 3.52 \\
MoDe & \msd{89.33}{0.77} & \msd{88.11}{2.37} & \msd{82.41}{2.46} & \msd{53.00}{1.13} & 5.08 & \msd{63.88}{4.92} & \msd{67.18}{3.21} & \msd{64.66}{3.32} & \msd{49.59}{0.74} & 19.42 & \msd{46.56}{2.98} & \msd{48.16}{2.36} & \msd{46.78}{1.89} & \msd{49.80}{0.72} & 32.99 \\
FedOSD & \msd{96.36}{0.28} & \msd{96.92}{0.51} & \msd{87.81}{0.44} & \msd{53.85}{0.89} & 0.55 & \msdb{96.56}{1.43} & \msd{96.26}{1.91} & \msd{87.71}{0.25} & \msd{54.37}{0.83} & 3.61 & \msd{96.15}{0.35} & \msd{96.25}{1.23} & \msd{87.62}{0.32} & \msd{55.16}{0.96} & 3.71 \\
NoT & \msd{93.70}{0.49} & \msd{93.39}{1.18} & \msd{85.87}{0.53} & \msd{53.72}{2.37} & 1.62 & \msd{95.80}{0.38} & \msd{93.51}{0.91} & \msd{87.50}{0.63} & \msd{51.41}{2.74} & 2.00 & \msd{92.75}{0.64} & \msd{92.31}{0.25} & \msdb{89.53}{0.16} & \msd{54.69}{1.78} & 3.94 \\
FedCARE & \msd{96.36}{0.72} & \msd{97.14}{1.48} & \msdb{87.83}{1.32} & \msd{54.35}{0.14} & 0.63 & \msd{96.46}{0.94} & \msd{96.92}{0.28} & \msd{87.77}{0.60} & \msd{54.05}{1.72} & 3.69 & \msd{96.19}{0.63} & \msd{96.77}{0.36} & \msd{87.80}{0.23} & \msd{55.04}{1.06} & 3.85 \\
FedMemPrune & \msd{91.79}{0.37} & \msd{92.95}{1.22} & \msd{84.97}{0.55} & \msd{51.68}{1.52} & 2.94 & \msd{92.16}{0.24} & \msd{88.33}{0.93} & \msd{84.80}{0.32} & \msd{50.40}{1.21} & 2.23 & \msd{87.36}{0.46} & \msd{83.85}{0.36} & \msd{81.44}{0.55} & \msd{50.36}{1.24} & 5.35 \\
\specialrule{0.45pt}{2pt}{1pt}
\rowcolor{black!5} \textbf{SURE (Ours)} & \msd{96.09}{0.21} & \msd{96.48}{0.39} & \msd{87.34}{0.42} & \msd{53.69}{1.09} & \textbf{0.45} & \msd{96.17}{0.73} & \msd{92.73}{1.80} & \msd{87.57}{0.63} & \msd{51.72}{1.73} & \textbf{1.93} & \msdb{97.17}{0.72} & \msd{94.13}{0.57} & \msd{88.01}{0.22} & \msd{53.77}{1.52} & \textbf{2.84} \\
\specialrule{0.45pt}{1pt}{2pt}
\addlinespace[1pt]
\multicolumn{16}{c}{\textit{CIFAR-100 / ResNet-18}} \\
\cmidrule(lr){1-16}
\textit{Retrain} & \msd{98.24}{1.17} & \msd{97.98}{0.54} & \msd{57.25}{0.64} & \msd{79.08}{1.15} & 0.00 & \msd{99.05}{1.13} & \msd{54.55}{1.35} & \msd{56.94}{0.24} & \msd{51.40}{0.79} & 0.00 & \msd{98.81}{0.99} & \msd{54.34}{1.38} & \msd{55.65}{0.41} & \msd{49.81}{0.62} & 0.00 \\
\cmidrule(lr){1-16}
Fine-tune & \msd{98.08}{0.52} & \msd{98.79}{0.64} & \msd{57.28}{0.47} & \msd{78.01}{0.56} & 0.52 & \msd{98.83}{0.87} & \msd{84.44}{1.32} & \msd{57.32}{0.22} & \msd{66.19}{0.77} & 11.32 & \msdb{99.30}{1.12} & \msd{82.83}{1.32} & \msdb{57.72}{0.44} & \msd{64.90}{0.46} & 11.54 \\
FedEraser & \msd{96.93}{0.41} & \msd{97.78}{0.68} & \msd{56.05}{0.72} & \msd{78.16}{1.17} & 0.91 & \msd{49.62}{1.02} & \msd{44.65}{0.47} & \msd{45.16}{0.74} & \msd{49.50}{0.66} & 18.25 & \msd{7.80}{0.87} & \msd{9.09}{1.50} & \msd{7.33}{0.36} & \msd{50.29}{0.27} & 46.27 \\
FedRecovery & \msd{98.06}{0.37} & \msd{98.99}{0.92} & \msd{57.34}{0.82} & \msd{79.07}{1.08} & 0.32 & \msd{98.06}{0.76} & \msd{98.99}{1.37} & \msdb{57.33}{0.51} & \msd{79.05}{0.94} & 18.37 & \msd{98.06}{0.85} & \msd{98.99}{1.06} & \msd{57.33}{0.24} & \msd{79.08}{0.61} & 19.09 \\
SalUn & \msd{95.94}{0.78} & \msd{95.15}{1.18} & \msd{55.26}{0.68} & \msd{76.47}{0.29} & 2.43 & \msd{96.01}{0.82} & \msd{62.02}{0.79} & \msd{56.48}{0.75} & \msd{46.54}{0.40} & \textbf{3.96} & \msd{75.96}{1.16} & \msd{81.41}{1.11} & \msd{55.78}{0.65} & \msd{47.38}{0.91} & 13.12 \\
MoDe & \msd{93.51}{0.68} & \msd{94.75}{1.23} & \msd{54.04}{0.85} & \msd{76.00}{0.94} & 3.56 & \msd{27.59}{1.08} & \msd{30.10}{0.81} & \msd{26.72}{0.70} & \msd{50.99}{0.82} & 31.64 & \msd{22.16}{0.82} & \msd{23.84}{0.51} & \msd{21.12}{0.78} & \msd{51.25}{0.73} & 35.78 \\
FedOSD & \msd{98.13}{0.73} & \msd{98.99}{0.62} & \msd{57.17}{0.70} & \msd{79.34}{0.78} & 0.37 & \msd{98.06}{0.52} & \msd{97.37}{0.27} & \msd{57.21}{0.60} & \msd{77.90}{0.45} & 17.65 & \msd{98.13}{1.12} & \msd{97.37}{0.74} & \msd{57.28}{0.49} & \msd{77.71}{1.22} & 18.31 \\
NoT & \msd{95.40}{1.15} & \msd{94.75}{1.01} & \msd{55.45}{0.85} & \msd{76.66}{0.27} & 2.57 & \msd{98.87}{0.71} & \msd{81.82}{0.82} & \msd{56.20}{0.79} & \msd{64.61}{1.03} & 10.35 & \msd{88.56}{0.24} & \msd{79.58}{0.30} & \msd{56.40}{0.41} & \msd{63.51}{0.88} & 12.49 \\
FedCARE & \msd{98.11}{0.34} & \msd{98.99}{0.94} & \msd{57.25}{0.49} & \msd{79.11}{0.29} & 0.29 & \msd{98.13}{0.75} & \msd{98.79}{1.43} & \msd{57.29}{0.43} & \msd{79.13}{0.73} & 18.31 & \msd{98.13}{0.88} & \msd{98.79}{0.85} & \msd{57.28}{0.94} & \msd{79.13}{0.46} & 19.02 \\
FedMemPrune & \msd{94.05}{0.40} & \msd{93.94}{0.67} & \msd{54.03}{0.65} & \msd{76.04}{0.20} & 3.62 & \msd{83.75}{0.36} & \msd{63.23}{1.53} & \msd{52.70}{0.85} & \msd{56.03}{0.47} & 8.21 & \msd{75.05}{1.11} & \msd{51.52}{0.50} & \msd{44.81}{0.80} & \msd{52.90}{1.21} & \textbf{10.13} \\
\specialrule{0.45pt}{2pt}{1pt}
\rowcolor{black!5} \textbf{SURE (Ours)} & \msdb{98.22}{0.59} & \msd{98.79}{0.49} & \msdb{57.48}{0.63} & \msd{79.04}{1.12} & \textbf{0.28} & \msdb{98.96}{0.40} & \msd{87.27}{0.80} & \msd{57.29}{0.89} & \msd{67.46}{0.36} & 12.31 & \msd{98.94}{0.48} & \msd{84.24}{1.51} & \msd{57.25}{0.79} & \msd{66.90}{0.38} & 12.18 \\
\bottomrule
\end{tabular}%
}
\endgroup
\end{table*}

\begin{figure*}[!h]
\centering
\includegraphics[width=0.94\textwidth]{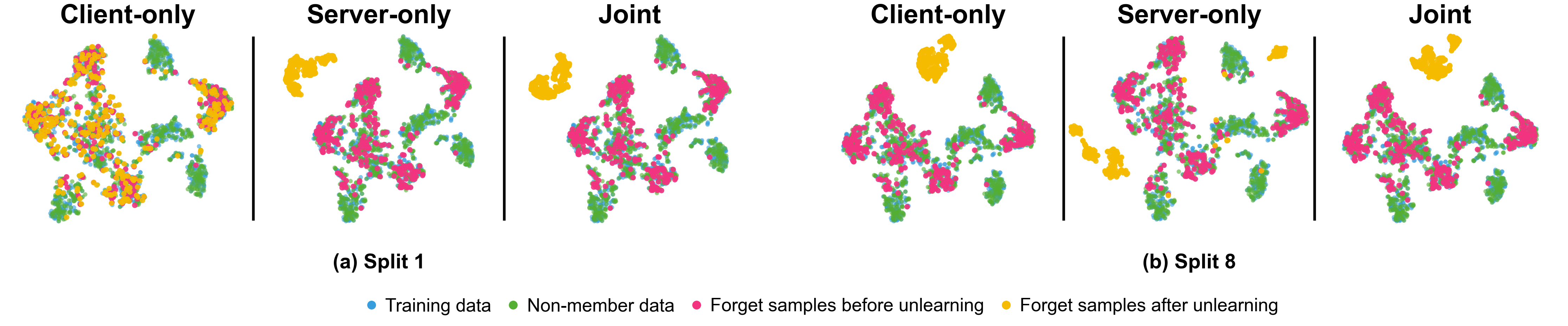}
\setlength{\abovecaptionskip}{1pt}
\caption{t-SNE visualization of client-wise unlearning on CIFAR-10/ResNet-18 at split 1 (left) and split 8 (right). Each row compares Client-only, Server-only, and Joint updates. Blue and green denote retained-training and non-member samples, respectively, while pink and yellow indicate forgotten samples before and after SURE.}
\label{fig:sure-clientwise-tsne}
\end{figure*}

\begin{figure*}[!h]
\centering
\includegraphics[width=0.94\textwidth]{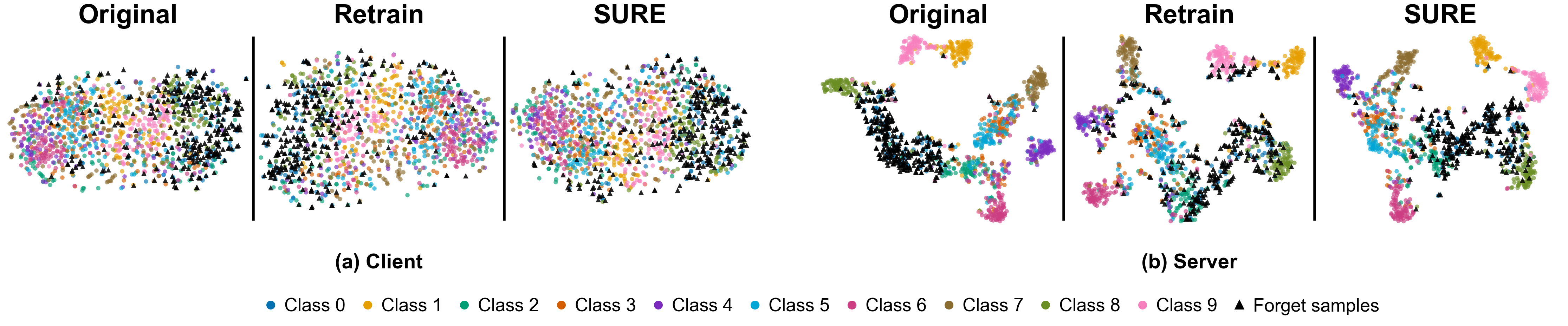}
\setlength{\abovecaptionskip}{1pt}

\caption{t-SNE visualization of class-wise unlearning on CIFAR-10/ResNet-18 under Joint updates. Panels (a) and (b) show client cut-layer and server pre-classifier representations, respectively, each comparing Original, Retrain, and SURE. Colors denote classes, and black triangles mark forgotten class-0 samples.}

\label{fig:sure-cifar-rep}
\end{figure*}

\subsection{Additional Representation Analysis}
\label{app:additional-representations}

\textbf{\textit{Client-Wise Representation Responses.}}
To complement the split-position analysis, we visualize forgotten-sample representations at shallow (split 1) and deep (split 8) cuts on CIFAR-10/ResNet-18. As shown in Fig.~\ref{fig:sure-clientwise-tsne}, forgotten samples remain largely intermixed with other samples under Client-only updates at split 1, whereas Server-only and Joint updates produce more distinct post-unlearning neighborhoods. At split 8, Client-only updates also yield a more concentrated target-sample distribution. These observations illustrate how cut placement and update permissions affect representation responses, although t-SNE separation alone does not establish forgetting effectiveness.

\textbf{\textit{Class-Wise Representation Responses.}}
We further examine client cut-layer and server pre-classifier representations under Joint updates on CIFAR-10/ResNet-18. Fig.~\ref{fig:sure-cifar-rep} shows relatively mixed class distributions at the client partition, whereas the server partition exhibits more structured class neighborhoods. SURE and Retrain display different target-sample arrangements, suggesting that similar prediction-level forgetting behavior does not necessarily imply identical learned representations. These qualitative observations complement the performance results without establishing representation-level equivalence.

\subsection{Multiple Forgetting Clients}
\label{app:additional-multiclient}



We evaluate SURE with multiple client removals on CIFAR-10/ResNet-18 under Dirichlet non-IID partitions ($\alpha=0.1$), where $T$--$F$ denotes $T$ total clients and $F$ forgetting clients. Each setting uses its corresponding mode-matched Retrain reference. As shown in Table~\ref{tab:sure-many}, increasing the forgetting fraction from 10\% (10--1) to 50\% (10--5) reduces Test accuracy and increases Avg. Gap across all update modes. The effect is particularly pronounced under Joint updates, where Test accuracy drops from 77.36\% to 56.42\% and Avg. Gap rises from 1.90 to 4.36 pp. Comparisons among 10--1, 20--2, and 50--5 further reveal mode-dependent behavior at a fixed forgetting ratio. These results demonstrate the challenges of maintaining utility and retraining consistency under larger removal requests.

\begin{table}[t]
\centering
\caption{SURE with multiple forgetting clients on CIFAR-10/ResNet-18 under Dirichlet $\alpha=0.1$. A configuration $T$--$F$ denotes $T$ total clients and $F$ forgetting clients. Avg. Gap (pp) is computed against the corresponding Retrain reference.}
\label{tab:sure-many}
\begingroup
\scriptsize
\setlength{\tabcolsep}{2pt}
\renewcommand{\arraystretch}{0.1}
\begin{tabular}{>{\centering\arraybackslash}p{18mm}lccccc}
\toprule
Mode & Metric & 10--1 & 10--2 & 10--5 & 20--2 & 50--5 \\
\midrule
\multirow{5}{18mm}{\centering\bfseries Client-only}
& Retain (\%) & 81.23$\pm$2.50 & 81.78$\pm$0.51 & 72.37$\pm$0.83 & 72.42$\pm$0.83 & 61.19$\pm$0.76 \\
& Forget (\%) & 67.83$\pm$3.74 & 69.05$\pm$1.22 & 70.98$\pm$1.28 & 74.67$\pm$0.66 & 51.69$\pm$0.36 \\
& Test (\%)   & 76.89$\pm$2.09 & 76.87$\pm$0.77 & 72.21$\pm$0.44 & 71.41$\pm$0.71 & 58.33$\pm$0.59 \\
& MIA (\%)    & 50.87$\pm$2.15 & 51.62$\pm$0.55 & 51.80$\pm$0.47 & 51.19$\pm$0.79 & 50.25$\pm$0.94 \\
& Avg. Gap $\downarrow$ & 0.41 & 0.91 & 1.96 & 0.82 & 0.81 \\
\midrule
\multirow{5}{18mm}{\centering\bfseries Server-only}
& Retain (\%) & 81.16$\pm$1.99 & 82.36$\pm$0.52 & 75.18$\pm$0.37 & 75.22$\pm$0.71 & 57.63$\pm$0.35 \\
& Forget (\%) & 67.43$\pm$4.39 & 70.38$\pm$1.04 & 51.75$\pm$0.36 & 76.17$\pm$0.62 & 52.33$\pm$0.71 \\
& Test (\%)   & 76.54$\pm$4.38 & 75.99$\pm$0.30 & 57.70$\pm$0.23 & 72.56$\pm$0.62 & 55.15$\pm$0.25 \\
& MIA (\%)    & 51.22$\pm$1.33 & 50.98$\pm$0.82 & 51.06$\pm$0.16 & 50.64$\pm$0.32 & 49.40$\pm$0.87 \\
& Avg. Gap $\downarrow$ & 1.63 & 2.12 & 2.93 & 1.16 & 4.34 \\
\midrule
\multirow{5}{18mm}{\centering\bfseries Joint}
& Retain (\%) & 81.76$\pm$2.17 & 81.65$\pm$0.48 & 67.88$\pm$0.99 & 74.94$\pm$0.86 & 63.50$\pm$0.52 \\
& Forget (\%) & 65.34$\pm$4.93 & 72.19$\pm$1.27 & 52.44$\pm$0.96 & 75.17$\pm$1.30 & 51.85$\pm$0.62 \\
& Test (\%)   & 77.36$\pm$1.22 & 74.97$\pm$0.64 & 56.42$\pm$0.78 & 72.26$\pm$0.51 & 60.39$\pm$0.54 \\
& MIA (\%)    & 48.61$\pm$1.94 & 50.55$\pm$0.25 & 51.15$\pm$0.26 & 50.36$\pm$0.23 & 48.83$\pm$0.37 \\
& Avg. Gap $\downarrow$ & 1.90 & 2.00 & 4.36 & 3.63 & 3.78 \\
\bottomrule
\end{tabular}%
\endgroup
\end{table}

\begin{figure*}[]
\centering
\includegraphics[width=\textwidth]{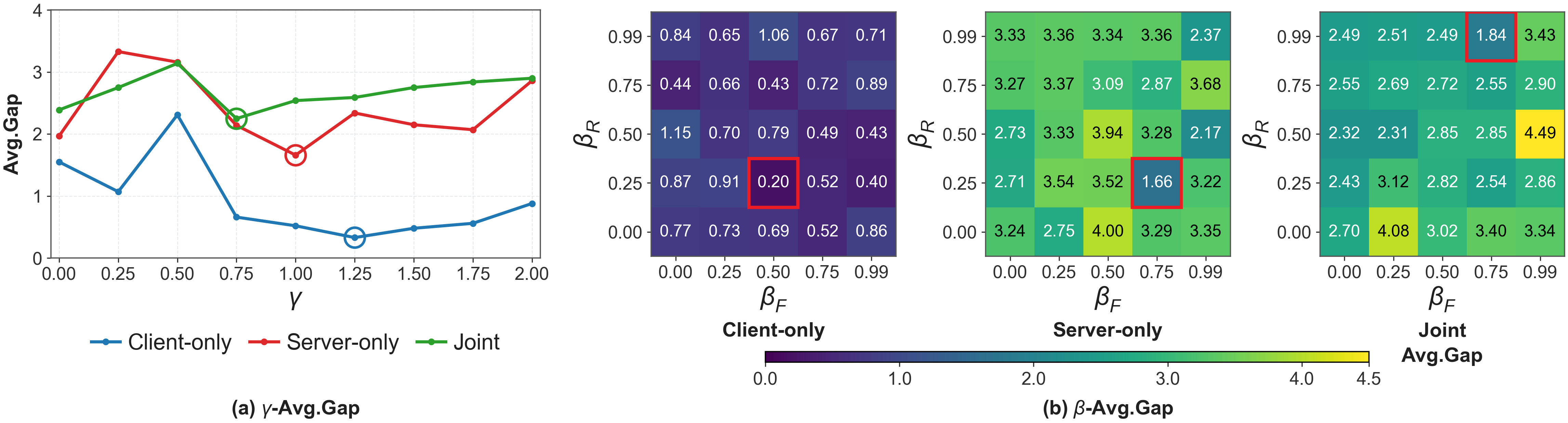}
\setlength{\abovecaptionskip}{-5pt}
\caption{Sensitivity of SURE to perturbation protection and recovery coefficients on CIFAR-10/ResNet-18 under Dirichlet $\alpha=0.1$. (a) Avg. Gap as the perturbation protection coefficient $\gamma$ varies under Client-only, Server-only, and Joint updates. (b) Avg. Gap over recovery coefficients $(\beta_R,\beta_F)$ for the three update modes. Lower is better. Circles and red boxes mark the best setting within each mode.}
\label{fig:sure-sensitivity}
\end{figure*}

\subsection{Hyperparameter Sensitivity}
\label{app:additional-sensitivity}

We evaluate the sensitivity of SURE to the perturbation protection coefficient $\gamma$ and the recovery coefficients $(\beta_R,\beta_F)$ on CIFAR-10/ResNet-18. Fig.~\ref{fig:sure-sensitivity}(a) shows that the best $\gamma$ differs across update modes: the minimum Avg. Gap is obtained at $\gamma=1.25$ for Client-only, $\gamma=1.0$ for Server-only, and $\gamma=0.75$ for Joint updates. Fig.~\ref{fig:sure-sensitivity}(b) shows a similar mode dependence for the recovery coefficients. The best setting is $(\beta_R,\beta_F)=(0.25,0.50)$ for Client-only, $(0.25,0.75)$ for Server-only, and $(0.99,0.75)$ for Joint updates. Since no single configuration is uniformly optimal across all modes, we use the shared default $(\gamma,\beta_R,\beta_F)=(1,0.25,0.75)$ in the main experiments as a practical compromise.

\section{Proofs of Theoretical Results}
\label{app:theory}

\subsection{Proof of Theorem~\ref{thm:influence_propagation}}
\label{app:proof-influence}

\begin{proof}
Add and subtract $\mathcal{G}_t(z_R^t)$ in the trajectory difference. By the fundamental theorem of calculus and the definitions of $A_t$ and $b_t$,
\begin{equation}
\delta_{t+1}
=\mathcal{G}_t(z^t)-\mathcal{G}_t(z_R^t)+b_t
=A_t\delta_t+b_t.
\end{equation}
Iterating from the shared initialization $\delta_0=0$ gives Eq.~\eqref{eq:sfu_influence_decomposition}. The triangle inequality and submultiplicativity yield
\begin{equation}
\|\delta_T\|_2\le
\sum_{t=0}^{T-1}
\left(\prod_{r=t+1}^{T-1}\|A_r\|_2\right)
\|b_t\|_2,
\end{equation}
which implies Eq.~\eqref{eq:sfu_influence_bound} using $\|A_r\|_2\le L_r$.
\end{proof}

\subsection{Proof of Theorem~\ref{thm:feasibility_bound}}
\label{app:proof-feasibility}

\begin{proof}
For any $\Delta\theta\in\Theta_{\mathcal{M}}$ with $\|\Delta\theta\|_2\le\rho$, write
\begin{equation}
F(\theta^*+\Delta\theta)
=F(\theta^*)+DF(\theta^*)[\Delta\theta]+R_\Delta.
\end{equation}
The $L$-Lipschitz derivative assumption gives
\begin{equation}
\|R_\Delta\|_{\mathcal{H}}
\le\int_0^1 L\alpha\|\Delta\theta\|_2^2\,d\alpha
\le\frac{L\rho^2}{2}.
\end{equation}
Since $DF(\theta^*)[\Delta\theta]\in\mathcal{V}_{\mathcal{M}}$, orthogonal projection yields
\begin{equation}
\begin{aligned}
&\|F(\theta^*+\Delta\theta)-F(\theta^R)\|_{\mathcal H}\\
&\quad\ge
\|DF(\theta^*)[\Delta\theta]-r\|_{\mathcal H}
-\|R_\Delta\|_{\mathcal H}\\
&\quad\ge d_{\mathcal M}-\frac{L\rho^2}{2}.
\end{aligned}
\end{equation}
Taking the nonnegative part and infimum over admissible updates proves Eq.~\eqref{eq:sfu_feasibility_bound}.
\end{proof}

\subsection{Proof of Corollary~\ref{cor:mode_hierarchy}}
\label{app:proof-hierarchy}

\begin{proof}
The inclusions
$\Theta_{\{C\}},\Theta_{\{S\}}\subseteq\Theta_{\{C,S\}}$
imply the same inclusions for their images under $DF(\theta^*)$. Thus, the projection residual cannot increase under Joint updates. The corresponding nonlinear feasible sets are also nested for a common $\rho$; taking infima establishes the feasibility-error hierarchy in Eq.~\eqref{eq:sfu_feasibility_hierarchy}.
\end{proof}

\subsection{Proof of Proposition~\ref{prop:excess_risk}}
\label{app:proof-risk}

\begin{proof}
For Client-only updates, set $u_C=\theta_C-\theta_C^R$ and $e_S=\theta_S^*-\theta_S^R$. The quadratic excess risk is
\begin{equation}
\begin{aligned}
Q_R-\mathcal{L}_R(\theta^R)
&=\tfrac12 u_C^\top H_{CC}u_C
+u_C^\top H_{CS}e_S\\
&\quad+\tfrac12 e_S^\top H_{SS}e_S.
\end{aligned}
\end{equation}
Since $H_{CC}\succ0$, its unique minimizer satisfies
$u_C^{\mathrm{opt}}=-H_{CC}^{-1}H_{CS}e_S$.
Substitution gives Eq.~\eqref{eq:sfu_client_risk_gap}. Exchanging $C$ and $S$ yields Eq.~\eqref{eq:sfu_server_risk_gap}. Both Schur complements are positive definite because $H_R\succ0$, proving nonnegativity and strict positivity for nonzero frozen-partition displacement.
\end{proof}

\section{Additional Experimental Details}
\label{app:experimental-details}

\subsection{Experimental Settings}
\label{app:experimental-settings}


\textbf{\textit{Data and Model Splits.}}
Extreme non-IID partitions assign one exclusive class per client on MNIST/CIFAR-10 and ten classes per client on CIFAR-100. Class-wise, ablation, sensitivity, and multi-client experiments use Dirichlet $\alpha=0.1$. Client partitions contain the first convolution-pooling block of LeNet, the first two convolutional blocks of CNN, or the ResNet-18 stem and first residual block (split 2). Nine ResNet-18 cut positions are examined separately.

\textbf{\textit{Training Configurations.}}
Original and Retrain share identical round budgets: 50 for MNIST, 100/500 for CIFAR-10/CNN, and 200/1000 for ResNet-18 under IID/extreme non-IID partitions, respectively. Class-wise experiments use 50 (MNIST) or 200 (CIFAR-10) rounds. Each client performs one local epoch using SGD (momentum $0.9$), with batch sizes 32 (MNIST) or 128 (CIFAR). SURE uses batch size 32, at most 40 recovery rounds, and $(\gamma,\beta_R,\beta_F)=(1,0.25,0.75)$. MNIST uses learning rate $0.005$ and weight decay $10^{-4}$. IID CIFAR uses $0.05$ and $5\times10^{-4}$, with cosine decay to $0.001$ after 80\% of training. Extreme non-IID CIFAR uses $0.005$ and $10^{-4}$, with exponential decay ($0.998$ for CNN; $0.999$ for ResNet-18) to a minimum of $0.0002$. MNIST uses normalization only; CIFAR additionally applies random cropping and horizontal flipping.



\textbf{\textit{Retraining and Checkpoint Selection.}}
Let $\theta^{\mathrm{init}}$ and $\theta^*$ denote the original initialization and trained checkpoint. Client-only, Server-only, and Joint Retrain initialize from $(\theta_C^{\mathrm{init}},\theta_S^*)$, $(\theta_C^*,\theta_S^{\mathrm{init}})$, and $\theta^{\mathrm{init}}$, respectively, using retained data while preserving frozen parameters and buffers. Main experiments use seeds 41--43; sensitivity and representation analyses use seed 42. Original and Retrain select checkpoints by validation accuracy, breaking ties by earlier rounds. Unlearning selects the highest-validation-accuracy checkpoint satisfying the forgetting constraint; otherwise, the checkpoint with the lowest target validation accuracy is reported as failing the constraint. Test data are excluded from selection.

\subsection{Baseline Adaptation to SFL}
\label{app:sfl-baseline-adaptation}



All baselines share SURE's data partitions, model architectures, and unlearning requests, with their operations adapted to split computation and dual-side aggregation. Parameter modifications are restricted to the permitted update modes, while frozen partitions retain their parameters and persistent states. Recovery-based methods follow the original SFL aggregation protocol, and history-dependent methods use records from the training runs. FedEraser and FedRecovery are evaluated only for client-wise unlearning.

\subsection{Membership Inference Attack}
\label{app:mia}




We use a gradient-boosting attacker with label-aware prediction features (probabilities, entropy, and margins). Forgotten training samples serve as members and class-matched unseen samples as non-members. Balanced attack data are split into disjoint training, calibration, and evaluation sets. The attacker is fitted and calibrated on Original, then held fixed across Retrain and unlearning methods. ROC-AUC measures membership distinguishability, with 50\% indicating chance-level discrimination rather than certified forgetting.

\subsection{Metric and Cost Accounting}
\label{app:metric-cost-protocol}

For class-wise unlearning, Forget measures class-0 test accuracy, while Test includes all classes.
Communication includes model transfers, cut-layer activations, and gradients, while computation is estimated in FLOPs. Client-wise and ablation costs reflect executed procedures, whereas class-wise costs use a fixed 40-round budget. 

\end{document}